\documentclass{article}
\bibliographystyle{ieeetr} 

\usepackage[nonatbib,final]{neurips_2025}

\usepackage[utf8]{inputenc} 
\usepackage[T1]{fontenc}    
\usepackage{hyperref}       
\usepackage{url}            
\usepackage{booktabs}       
\usepackage{amsfonts}       
\usepackage{nicefrac}       
\usepackage{microtype}      
\usepackage{xcolor} 
\usepackage{enumitem}
\usepackage{multibib}
\usepackage{amsmath}
\usepackage{bm}
\usepackage{amsthm}
\usepackage{amssymb}
\usepackage{amsfonts}
\usepackage{adjustbox}

\usepackage{wrapfig}

\newcites{supp}{References}
\newcommand{\ourmethod}[0]{Rectified Point Flow{}}

\usepackage{pifont}
\usepackage{graphicx}
\usepackage{multirow} 
\usepackage{amsmath}
\usepackage{adjustbox}
\usepackage{arydshln}
\usepackage{float}
\usepackage{amssymb}
\usepackage{xspace}
\usepackage{makecell}
\usepackage{wrapfig}
\usepackage{comment}
\usepackage{xcolor}
\usepackage{subcaption}
\usepackage[dvipsnames]{xcolor}
\usepackage{wrapfig}

\usepackage{algorithm,algorithmicx,algpseudocode}

\usepackage{colortbl}
\usepackage{booktabs}

\newcommand{\bx}{\mathbf{x}}

\newcommand{\D}[0]{\mathrm{d}}
\newcommand{\X}[0]{\bm{X}}

\newcommand{\V}[0]{\bm{V}}

\newcommand{\cL}[0]{\mathcal{L}}
\newcommand{\cF}[0]{\mathcal{F}}
\newcommand{\cG}[0]{\mathcal{G}}
\newcommand{\cR}[0]{\mathcal{R}}
\newcommand{\cH}[0]{\mathcal{H}}
\newcommand{\Rad}[0]{\mathfrak{R}}

\newcommand{\R}[0]{\mathbb{R}}
\newcommand{\E}[0]{\mathbb{E}}

\newtheorem{theorem}{Theorem}
\newtheorem{property}{Property}
\newtheorem{definition}{Definition}
\newtheorem{lemma}{Lemma}

\newif\ifshowcomments %
\showcommentsfalse     %

\ifshowcomments
  
\else
  
\fi

\makeatletter
\DeclareRobustCommand\onedot{\futurelet\@let@token\@onedot}
\def\@onedot{\ifx\@let@token.\else.\null\fi\xspace}

\def\eg{\emph{e.g}\onedot} 
\def\ie{\emph{i.e}\onedot}

\def\wrt{w.r.t\onedot} 
\def\etal{\emph{et al}\onedot}
\makeatother

\title{Rectified Point Flow:\\Generic Point Cloud Pose Estimation}

\author{%
   Tao Sun$^*$ \\
  Stanford University\\
  \And
  Liyuan Zhu\thanks{Equal contribution.} \\
  Stanford University \\
  \And
  Shengyu Huang \\
  NVIDIA Research \\
  \AND
  Shuran Song \\
  Stanford University \\
  \And
  Iro Armeni \\
  Stanford University \\
}

\begin{document}

\maketitle

\begin{abstract}

We present \emph{\ourmethod}, a unified parameterization that formulates pairwise point cloud registration and multi-part shape assembly as a single conditional generative problem. Given unposed point clouds, our method learns a continuous point-wise velocity field that transports noisy points toward their target positions, from which part poses are recovered. 
In contrast to prior work that regresses part-wise poses with ad-hoc symmetry handling, our method intrinsically learns assembly symmetries without symmetry labels.
Together with {an overlap-aware encoder focused on inter-part contacts}, {\ourmethod} achieves a new state-of-the-art performance on six benchmarks spanning pairwise registration and shape assembly. 
Notably, our unified formulation enables effective joint training on diverse datasets, facilitating the learning of shared geometric priors and consequently boosting accuracy.
Our code and models are available at 
{\renewcommand\UrlFont{\color{magenta}\rmfamily\itshape}  \url{https://rectified-pointflow.github.io/}.}

\end{abstract}

\section{Introduction}
\label{sec:intro}
Estimating the relative poses of rigid parts from 3D point clouds for alignment is a core task in computer vision and robotics, with applications spanning pairwise registration~\cite{zeng20163dmatch} and complex multi-part shape assembly~\cite{li2023rearrangement}. In many settings, the input consists of an unordered set of part-level point clouds--without known correspondences, categories, or semantic labels--and the goal is to infer a globally consistent configuration of poses, essentially solving a multi-part (two or more) point cloud pose estimation problem. While conceptually simple, this problem is technically challenging due to the combinatorial space of valid assemblies and the prevalence of symmetry and part interchangeability in real-world shapes~\cite{Li2022GAPartNet,zhang2024generative,zhao2023learning}.

Despite sharing the goal of recovering 6-DoF transformations, different 3D reasoning tasks—such as object pose estimation, part registration, and shape assembly—have historically evolved in silos, treating each part independently and relying on task-specific assumptions and architectures. For instance, object pose estimators often assume known categories or textured markers~\cite{Hodan2020BOP,Tekin2018RealTime6D}, while part assembly algorithms may require access to a canonical target shape or manual part correspondences~\cite{Wang_2019_CVPR}. This fragmentation has yielded solutions that perform well in narrow domains but fail to generalize across tasks, object categories, or real-world ambiguities.

Among these tasks, multi-part shape assembly presents especially unique challenges. The problem is inherently under constrained: parts are often symmetric~\cite{implicitpdf2021}, interchangeable~\cite{li2024category}, or geometrically ambiguous, leading to multiple plausible local configurations. As a result, conventional part-wise registration can produce flipped or misaligned configurations that are locally valid but globally inconsistent with the intended assembly. Overcoming such ambiguities requires a model that can reason jointly about part identity, relative placement, and overall shape coherence—without relying on strong supervision or hand-engineered heuristics. 

In this work, we revisit 3D pose regression and propose a generative approach for generic point cloud pose estimation that casts the problem as learning a continuous point-wise flow field over the input geometry, effectively capturing priors over assembled shapes. Our method, \ourmethod{}, models the motion of points from random Gaussian noise in Euclidean space toward the point clouds of assembled objects. This learned flow implicitly encodes part-level transformations, enabling both discriminative pose estimation and generative shape assembly within a single framework. 
\ourmethod{} consists of an encoder that extracts point-wise features, and a flow model that, given the features, predicts the final assembled positions. 

To instill geometric awareness of inter-part relationships, we pretrain the encoder on large-scale 3D shape datasets: predicting point-wise overlap across parts, formulated as a binary classification task. 
While GARF~\cite{li2025garf} also highlights the value of encoder pretraining for a flow model, it relies on mesh-based physical simulation~\cite{sellan2022breaking} to generate fracture-based supervision signals. In contrast, we introduce a lightweight and scalable alternative that constructs pretraining data by computing geometric overlap between parts. Our data generation is agnostic to data sources tailored for different tasks—including part segmentation~\cite{partfield2025,objaverse_Deitke,mo2019partnet}, shape assembly~\cite{sellan2022breaking,qi2025two,wangikea}, and registration~\cite{hodan2018_tudl,wu_modelnet}—without requiring watertight mesh or simulation, an important step towards scalable pretraining for pose estimation.

Our flow-based pose estimation departs from traditional pose-vector regression in three ways:
(i) \textbf{Joint shape-pose reasoning}: We cast the registration and assembly tasks as one unified task that reconstructs the complete shape while simultaneously enabling the estimation of part poses;
(ii) \textbf{Scalable shape prior learning}: By training to predict the final assembled point cloud, our model learns from heterogeneous datasets and part definitions, yielding scalable training and transferable geometric knowledge across standard pairwise registration, fracture reassembly, and complex furniture assembly tasks; and (iii) \textbf{Intrinsic symmetry handling}: Rather than regressing pose vectors directly in $\mathrm{SE}(3)$ space, we operate in Euclidean space over dense point clouds. This makes the model inherently robust to symmetries, part interchangeability, and spatial ambiguities that often challenge conventional methods.
Our main contributions are summarized as follows:
\begin{itemize}[leftmargin=2em, topsep=1pt]
    \item We propose \ourmethod{}, a generative approach for generic point cloud pose estimation that addresses both pairwise registration and multi-part assembly tasks and achieves state-of-the-art performances on all the tasks. 
    \item We propose a generalizable pretraining strategy with geometric awareness of inter-part relationships across several 3D shape datasets, and formulate it as point-wise overlap prediction.
    \item We show that our parameterization supports joint training across different registration tasks, boosting the performance on each individual task.
\end{itemize}

\section{Related Work}
\label{related_work}
\paragraph{Parametrization for Pose Estimation.}
 Euler angles and quaternions are the predominant parametrization of rotation in various pose regression tasks~\cite{Johannes_colmap_cvpr,Zhu2022CVPR,jiang2023se3,zhu2025_loopsplat,hosseini2023puzzlefusion,wang2025puzzlefusionpp,li2025garf,zhu2023living} due to their simplicity and usability. As Euler angles and quaternions are discontinuous representations, Zhou \etal~\cite{Zhou_2019_CVPR} proposed to represent 3D rotation with a continuous representation for neural networks using 6D and 7D vectors. In contrast to directly regressing pose vectors, other methods train networks to find sparse correspondences between image pairs or point cloud pairs and extract pose vectors using Singular Value Decomposition (SVD) \cite{Sarlin_2020_superglue,huang_predator,Wang_2019_CVPR,yew2020_RPMNet,Wang_2019_dcpnet,qin2022geometric}. More recently, RayDiffusion~\cite{zhang2024raydiffusion} proposed to represent camera poses as ray bundles, naturally suited for coupling image features and transformer architectures. 
 Huang \etal~\cite{huang2024imagination} adopted a point cloud generative model for policy learning in robot pick-and-place tasks, then recovered relative poses between the object and gripper via SVD.
 DUSt3R~\cite{dust3r_cvpr24} directly regresses the pointmap of each camera in a global reference frame and then extracts the camera pose using RANSAC-PnP~\cite{fischler1981random,lepetit2009ep}.   Our proposed Rectified Point Flow, extends the dense point cloud or point map representation  for learning generalizable pose estimation on point cloud registration and shape assembly tasks.

\paragraph{Learning-based 3D Registration.} 3D registration aims to align point cloud pairs in the same reference frame by solving the relative transformation from source to target. The first line of work focuses on correspondence-based methods \cite{choy2019fully,zeng20163dmatch,deng2018ppfnet,gojcic2019perfect} that first extract correspondences between point clouds, followed by robust estimators to recover the transformation. Subsequent works \cite{huang_predator,bai2020d3feat,wang2022yoho,qin2022geometric} advance the performance by learning more powerful features with improved architecture and loss design. The second line of work comprises direct registration methods~\cite{Wang_2019_dcpnet,wang2019prnet,yew2020_RPMNet,jiang2023se3} that directly compute a score matrix and apply differentiable weighted SVD to solve for the transformation.
Correspondence-based methods can fail in extremely low-overlap scenarios in shape assembly and direct methods fall short in terms of pose accuracy. Our method, which directly regresses the coordinates of each point in the source point cloud, is agnostic and more generalizable to varying overlap ratios compared to direct methods.

\paragraph{Multi-Part Registration and Assembly.}
Multi-part registration and shape assembly generalize pairwise relative pose estimation to multiple parts, with applications in furniture assembly~\cite{wangikea} and shape reassembly~\cite{sellan2022breaking}. Methods \cite{Huang2021MultiBodySyncMS,deng_banach,zhu2023living,atzmon2024approximately,huang2022dynamic} tackle the multi-part registration problem by estimating the transformation for each rigid part in the scene (multi-source and multi-target). Multi-part shape assembly differs as a task from registration because it has multi-source input and a canonical target, and each part has almost `zero' overlap \wrt each other. Chen \etal \cite{chen2022neural} adopt an adversarial learning scheme to examine the plausibility for different shape configurations. Wu \etal \cite{wu2023leveraging} leverage $\mathrm{SE(3)}$ equivariant representation to handle pose variations in shape assembly. DiffAssembly~\cite{scarpellini2024diffassemble} and PuzzleFusion~\cite{hosseini2023puzzlefusion,wang2025puzzlefusionpp} leverage diffusion models to predict the transformation for each part. GARF~\cite{li2025garf} combines fracture-aware pretraining with a flow matching model to predict per-part transformation. These methods, however, do not handle interchangeability and symmetry as well as ours does. Moreover, Rectified Point Flow is the first solution for furniture assembly of 3D shapes on the PartNet-Assembly~\cite{mo2019partnet} and IKEA-Manual~\cite{wangikea} datasets.

\begin{figure}[!t]
    \centering
    \includegraphics[width=1\linewidth]{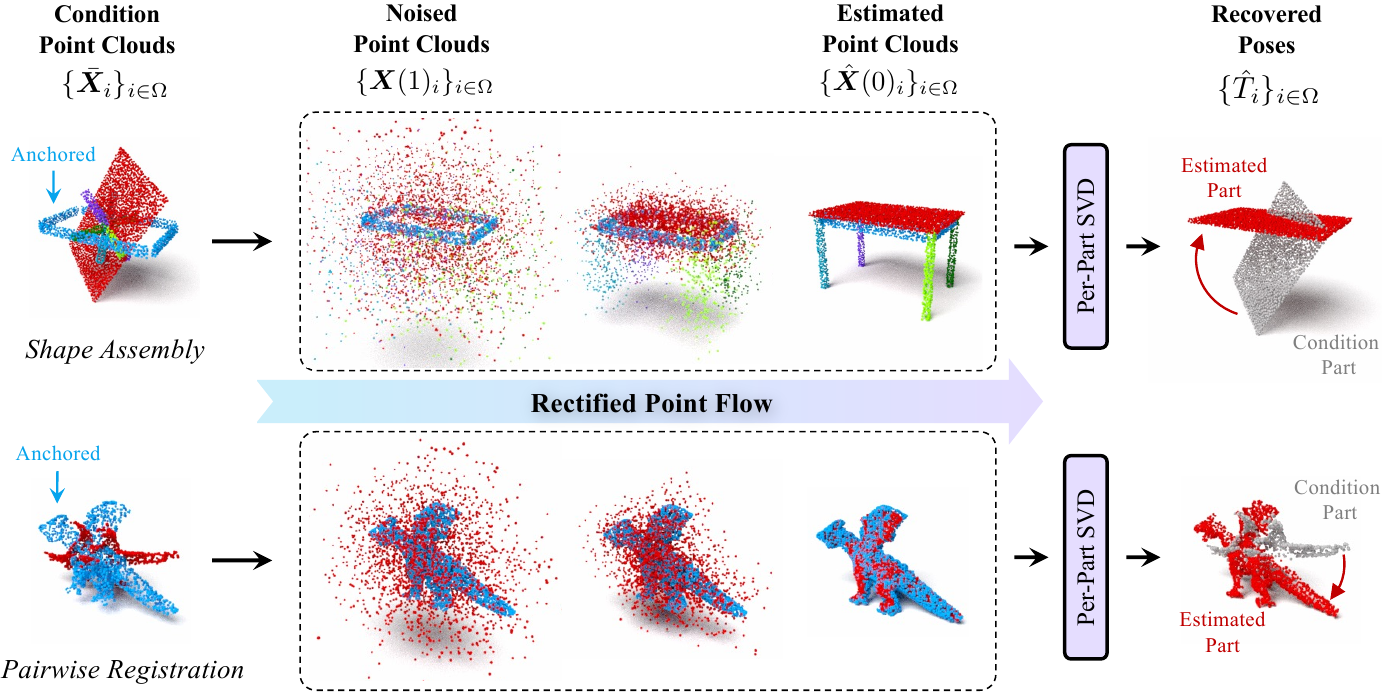}
    \vspace{-0.5em}
    \caption{\textbf{\ourmethod's Pose-from-Shape Pipeline}. Our formulation supports  \textit{shape assembly} (first row) and  \textit{pairwise registration} (second row) tasks in a single framework. Given a set of unposed part point clouds \(\{\bar \X_i\}_{i\in\Omega}\), \ourmethod{} predicts each part's point cloud at the target assembled state \(\{\hat \X_i{(0)}\}_{i\in\Omega}\). Subsequently, we solve Procrustes problem via SVD between the condition point cloud \(\bar \X_i\) and the estimated point cloud \(\hat \X_i(0)\) to recover the rigid transformation \(\hat T_i\) for each non-anchored part.
    }
    \label{fig:svd}
\end{figure}

\section{Pose Estimation via \ourmethod}
\label{method}

\ourmethod{} addresses the multi-part point cloud pose estimation problem, defined in Sec.~\ref{sec:problem}.
The overall pipeline consists of two consecutive stages: overlap-aware point encoding (Sec.~\ref{sec:pretrain}) and conditional Rectified Point Flow (Sec.~\ref{sec:rpf}). Finally, we explain how our formulation inherently addresses the challenges posed by symmetric and interchangeable parts in Sec.~\ref{sec:symmetry}.

\subsection{Problem Definition}
\label{sec:problem}
Consider a set of unposed point clouds of multiple object parts, $\{\X_i \in \mathbb{R}^{3 \times N_i}\}_{i\in\Omega}$, where $\Omega$ is the part index set, $H := |\Omega|$ is the number of parts, and $N_i$ is the number of points in part~$i$. The goal is to solve for a set of rigid transformations $\{T_i \in \mathrm{SE}(3)\}_{i\in \Omega}$ that align each part in the unposed multi-part point cloud $\bm{X}$ to form a single, assembled object $\bm{Y}$ in a global coordinate frame, where
\begin{equation}
\bm{X} := \bigcup_{i\in \Omega} \X_i \in \R^{3 \times N}, \quad \bm{Y} := \bigcup_{i\in \Omega}  T_i \X_i \in \R^{3 \times N}, \quad \text{and } N := \sum_{i\in\Omega} N_i.
\label{eq:prob}
\end{equation}
To eliminate global translation and rotation ambiguity, we set the first part ($i=0$) as the anchor and define its coordinate frame as the global frame. All other parts are registered to this anchor.

\begin{figure}[t]
    \centering
\includegraphics[width=1\linewidth]{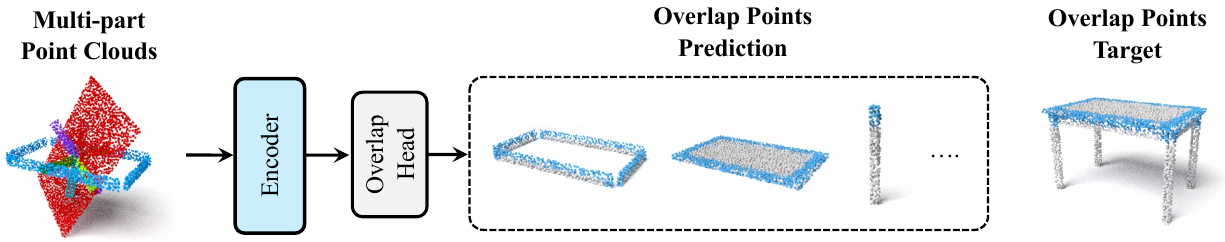}
    \caption{\textbf{Encoder pre-training via overlap points prediction.} Given unposed multi-part point clouds, our encoder with a point-wise overlap prediction head performs a binary classification to identify overlapping points. Predicted overlap points are shown in blue. For comparison, the ground-truth overlap points are visualized on the assembled object for clarity (target overlap). }
    \label{fig:stage1}
\end{figure}

\subsection{Overlap-aware Point Encoding}
\label{sec:pretrain}
Pose estimation relies on geometric cues from mutually overlapping regions among connected parts~\cite{huang_predator,qin2022geometric,li2025garf}. In our work, we address this challenge through a pretraining module that develops a task-agnostic, overlap-aware encoder capable of producing pose-invariant point features.
As illustrated in Fig.~\ref{fig:stage1}, we train an encoder $F$ to identify overlapping points in different parts. Given a set of unposed parts \(\{\X_i\}_{i\in\Omega}\), we first apply random rigid transforms \(\tilde T_i\in\mathrm{SE}(3)\) and compose transformed point clouds \(\tilde \X_i = \tilde T_i \X_i\) as input to the encoder. These data augmentations enable the encoder to learn more robust pose-invariant features. The encoder then computes per‐point features \(C_{i,j} \in \R^d\) for the $j$-th point on part $i$, after which an MLP overlap prediction head estimates the overlap probability $\hat p_{i,j}$.
The binary ground‐truth label \(p_{i,j}\) is 1 if point $\tilde \bx_{i,j}$ falls within radius $\epsilon$ of at least one point in other parts. 

We train both the encoder and the overlap head using binary cross‐entropy loss. For objects without predefined part segmentation, we employ off-the-shelf 3D part segmentation methods to generate the necessary labels. The features extracted by our trained encoder subsequently serve as conditioning input for our Rectified Point Flow model.

\label{sec:stages}

\subsection{Generative Modeling for Pose Estimation}

\label{sec:rpf}

The overlap‐aware encoder identifies potential overlap regions between parts but cannot determine their final alignment, particularly in symmetric objects that allow multiple valid assembly configurations. To address this limitation, we formulate the point cloud pose estimation as a \emph{conditional generation task}. With this approach, \ourmethod{} leverages the extracted point features to sample from the conditional distribution of all feasible assembled states across multi-part point clouds, generating estimates that maximize the likelihood of the conditional input point cloud. By recasting pose estimation as a generative problem, we naturally accommodate the inherent ambiguities arising from symmetry and part interchangeability in the data.

\paragraph{Preliminaries.} 
Rectified Flow (RF)~\cite{liu2022flow, liu2209rectified} is a score-free generative modeling framework that learns to transform a sample $\X(0)$ from a source distribution, into $\X(1)$ from a target distribution. The forward process is defined as linear interpolation between them with a timestep $t$ as
\begin{equation}
    \X(t) = (1-t)\X(0) + t \X(1), \quad t \in [0, 1].
\end{equation}  
The reverse process is modeled as a velocity field $\nabla_t \X(t)$, which is parameterized as a network $\V(t, \X(t)  \mid \X)$ conditioned on $\X$ and trained using conditional flow matching (CFM) loss~\cite{lipman2022flow}, 
\begin{equation}
    \cL_{\mathrm{CFM}}(\V) = \E_{t, \X}\left[ \left\| \V(t,  \X(t) \mid \X) - \nabla_t \X (t)  \right\|^2 \right].
    \label{eq:fm}
\end{equation}

\begin{figure}[!t]%
\centering
\includegraphics[width=0.99\linewidth]{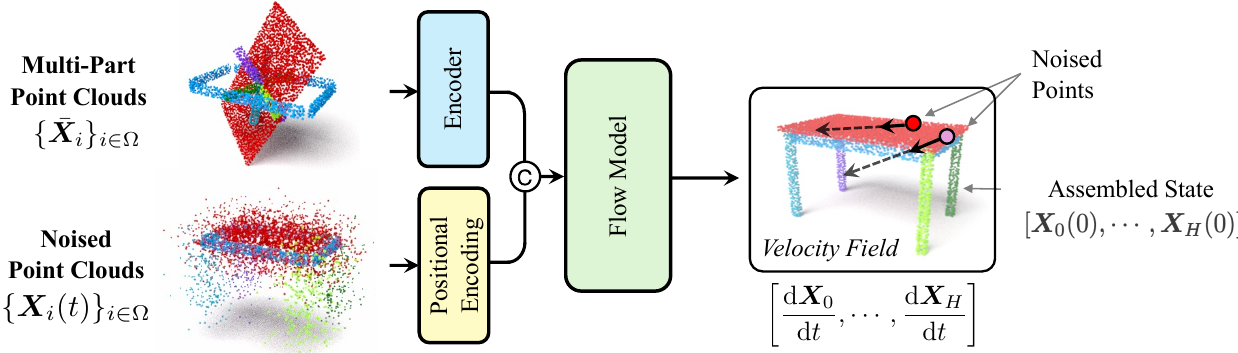} \\
\caption{\textbf{Learning \ourmethod.} The input to {\ourmethod} are the condition point clouds $\{\tilde \X_i\}_{i\in\Omega}$ and noised point clouds $\{\X_i(t)\}_{i\in\Omega}$ at timestep $t$. They are first encoded by the pre-trained encoder and the positional encoding, respectively. The encoded features are concatenated and passed through the flow model, which predicts per-point velocity vectors $\{\D \X_i(t)/ \D t\}_{i\in\Omega}$ and defines the flow used to predict the part point cloud in its assembled state.\label{fig:stage2}}
\end{figure}  

\paragraph{Rectified Point Flow.}

In our method, we directly apply RF to the 3D Euclidean coordinates of the multi-part point clouds.
Let $\X_i(t) \in \R^{3 \times M_i}$ denote the time-dependent point cloud for part $i$, where $M_i$ is number of sampled points. At $t = 0$, $\{\X_i(0)\}_{i \in \Omega}$ is uniformly sampled from the assembled object $\bm{Y}$, while at $t=1$, points on each part are independently sampled from a Gaussian, \ie, $\X_i(1) \sim \mathcal{N}(\bm{0}, \bm{I})$. 
Then, we define the continuous flow for each part as straight-line interpolation in 3D Euclidean space between the points in noised and assembled states. Specifically, for each part $i$,
\begin{equation}
    \X_i(t) = (1-t)\X_i(0) + t \X_i(1), \quad t \in [0, 1].
\end{equation}
The velocity field of \ourmethod{} is therefore,
\begin{equation}
    \frac{\D \X_i(t)}{\D t} = \X_i(1) - \X_i(0).
    \label{eq:v}
\end{equation}
We fix the anchored part ($i=0$) by setting $\X_0(t) = \X_0(0)$ for all $t\in[0, 1]$, implemented via a mask that zeros out the velocity for its points. 
Once the model predicts the assembled  point cloud of each part $\hat{\X}_i(0)$, we recover its pose $T_i$ in a Procrustes problem,
\begin{equation}
\hat{T}_i = \underset{\hat{T}_i \in \mathrm{SE}(3)}{\operatorname{arg\,min}}\  \| \hat{T}_i \X_i - \hat{\X}_i(0) \|_F.
\end{equation}
Solving poses $\hat{T}_i$ for all non-anchored parts via SVD completes the pose estimation task in Eq.~\ref{eq:prob}. 

\paragraph{Learning Objective.}
We train a flow model $\V$ to recover the velocity field in Eq.~\ref{eq:v}, taking the noised point clouds $ \{\X_i(t) \}_{i\in\Omega}$ and conditioning on unposed multi-part point cloud $\X$, as shown in Fig.~\ref{fig:stage2}. 
First, we encode $\X$ using the pre-trained encoder $F$. For each noised point cloud, we apply a positional encoding to its 3D coordinates and part index, concatenate these embeddings with the point features, and feed the result into the flow model.
We denote its predicted velocity field by the flow model for all points by $\V(t, \{\X_i(t) \}_{i\in\Omega} ; \X) \in \R^{3 \times M}$. We optimize the flow model $\V$ by minimizing the conditional flow matching loss in Eq.~\ref{eq:fm}.

\subsection{Invariance Under Rotational Symmetry and Interchangeability}
\label{sec:symmetry}
In our method, the straight-line point flow and point-cloud sampling, while simple, guarantee that every flow realization and its loss in Eq.~\eqref{eq:fm} remain invariant under an assembly symmetry group $\mathcal G$: 

\begin{theorem}[$\mathcal G$‑invariance of the learning objective]\label{thm:loss}
For every element $g \in \mathcal G$, we have the learning objective in Eq.~\ref{eq:fm} following 
\(
      \cL_\mathrm{CFM}(\V) = \cL_\mathrm{CFM}(g (\V(t, \{\X_i(t) \}_{i\in\Omega} ; g(\X)))).
\)
\end{theorem}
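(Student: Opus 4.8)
The plan is to reduce the identity to a measure-preserving change of variables inside the expectation defining $\cL_{\mathrm{CFM}}$, which moves the group element $g$ off the model and onto the integration variables, where it cancels. First I would make $\mathcal{G}$ and its action precise. Write $g\in\mathcal{G}$ as a pair $(\pi,\{R_i\}_{i\in\Omega})$: $\pi$ permutes part indices, fixing the anchor $i=0$ and only exchanging mutually interchangeable parts, and each $R_i\in\mathrm{SO}(3)$ realizes a symmetry/congruence, meaning $R_i\,\X_{\pi^{-1}(i)}=\X_i$ as point sets and $R_i\,P_{\pi^{-1}(i)}=P_i$ for all $i$, where $P_i:=T_i\X_i$ is the region occupied by part $i$ in the assembled object. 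This last compatibility is exactly what makes $g$ an \emph{assembly} symmetry: it sends a valid pose assignment $\{T_i\}$ to another one with the same union $\bm{Y}$. For any per-part family $\{\bm{Z}_i\}$ of $\R^3$-valued point sets set $g(\{\bm{Z}_i\}):=\{R_i\,\bm{Z}_{\pi^{-1}(i)}\}_i$; this is a blockwise orthogonal linear map on $\R^{3\times M}$, so it acts on the condition $\X$, the noised trajectory $\{\X_i(t)\}_{i\in\Omega}$, the target velocity $v_i:=\X_i(1)-\X_i(0)$, and hence on the flow model by conjugation — the object $g\bigl(\V(t,\{\X_i(t)\}_{i\in\Omega};g(\X))\bigr)$ in the statement being read as this conjugate field $g\circ\V\circ g^{-1}$ (inputs pulled back, output pushed forward).

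\textbf{Step 1: invariance of the flow law.} I would show that the joint law of $\bigl(t,\X,\{\X_i(0)\},\{\X_i(1)\},\{\X_i(t)\},\{v_i\}\bigr)$ is invariant under applying $g$ simultaneously to every point-valued component. Two facts drive this. (i) \emph{Source.} The $\X_i(1)$ are i.i.d.\ $\mathcal{N}(\bm{0},\bm{I})$ and interchangeable parts share the same count $M_i$, so rotation invariance and exchangeability of the standard Gaussian give $g(\{\X_i(1)\})\overset{d}{=}\{\X_i(1)\}$, independent of $t$ and of $\{\X_i(0)\}$. (ii) \emph{Target.} $\{\X_i(0)\}$ is a per-part uniform draw of $M_i$ points from $P_i$, so $g(\{\X_i(0)\})$ has part $i$ uniform on $R_i\,P_{\pi^{-1}(i)}=P_i$, hence $g(\{\X_i(0)\})\overset{d}{=}\{\X_i(0)\}$. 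Because $\X_i(t)=(1-t)\X_i(0)+t\X_i(1)$ and $v_i$ are linear in the endpoints while $g$ acts linearly with the \emph{same} permutation on every component, (i) and (ii) upgrade to the stated joint invariance; the condition $\X$ is carried along consistently — each part maps to itself as a set, but with the relabeling $\pi$ and rotation $R_i$ that a generic $\V$ does not ignore, which is why the statement writes $g(\X)$ rather than $\X$.

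\textbf{Step 2: change of variables.} I would then expand
$\cL_{\mathrm{CFM}}(\V^g)=\E\bigl[\,\bigl\|\,g\bigl(\V(t,g^{-1}(\{\X_i(t)\});g^{-1}(\X))\bigr)-\{v_i\}\,\bigr\|^2\,\bigr]$,
the expectation being over $t$, the data $(\X,\bm{Y})$, the draw of $\{\X_i(0)\}$ from $\bm{Y}$, and the draw of $\{\X_i(1)\}$ from the Gaussian. Substituting each point-valued integration variable by its $g$-image is measure preserving by Step 1, and it sends $g^{-1}(\{\X_i(t)\})\mapsto\{\X_i(t)\}$, $g^{-1}(\X)\mapsto\X$, and $\{v_i\}\mapsto g(\{v_i\})$. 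The integrand becomes
$\bigl\|g\bigl(\V(t,\{\X_i(t)\};\X)\bigr)-g(\{v_i\})\bigr\|^2=\bigl\|g\bigl(\V(t,\{\X_i(t)\};\X)-\{v_i\}\bigr)\bigr\|^2=\bigl\|\V(t,\{\X_i(t)\};\X)-\{v_i\}\bigr\|^2$,
the last step using that $g$ is a blockwise permutation of rotations, hence a Frobenius isometry of $\R^{3\times M}$. This is exactly $\cL_{\mathrm{CFM}}(\V)$, and since $g\in\mathcal{G}$ was arbitrary the theorem follows.

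\textbf{Main obstacle.} The analytic content is light — orthogonality plus a change of variables — so the real work is the bookkeeping behind Step 1 and the setup. One must give a well-defined $\mathcal{G}$-action on the \emph{sampled} points: a uniform sample of $M_i$ points from a symmetric set is invariant only as an exchangeable (unordered) point set, so one should either phrase the flow over empirical measures / unordered point sets, or note that $\cL_{\mathrm{CFM}}$ is itself invariant to relabeling the $M_i$ points within a part. One must also check that interchangeable parts genuinely have matching $M_i$ and matching assembled regions so the permutation action is well defined, and that requiring $\mathcal{G}$ to fix the anchor is precisely what keeps the masked velocity $\X_0(t)\equiv\X_0(0)$ compatible with the action. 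If a dataset supplies only one ground-truth assembly per shape, the identity should be read as ``replacing the ground truth by any $\mathcal{G}$-equivalent assembly leaves the loss unchanged'', which is what licenses the substitution. Finally, translations: if interchangeability is congruence rather than rotation about a shared origin, the translational part would not preserve $\mathcal{N}(\bm{0},\bm{I})$, so one either pre-centers each part (reducing to the rotational case) or restricts $\mathcal{G}$ to the rotational and permutational part that Step 1 actually uses.
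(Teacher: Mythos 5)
Your proposal is correct and follows essentially the same route as the paper: establish $\mathcal G$-invariance of the joint law of the flow (via Gaussian isotropy for $\X_i(1)$, stabilizer/exchangeability invariance of the uniform draw for $\X_i(0)$), then transfer $g$ onto the integration variables and cancel it using that the blockwise rotation-plus-permutation action is a Frobenius isometry. If anything, your Step~2 makes explicit the change-of-variables and isometry argument that the paper only asserts when it "lifts" its single-point Lemma to the full multi-part loss.
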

The formal definition of $\mathcal{G}$ and the proof of Theorem~\ref{thm:loss} appear in the supplementary material.
As a result, the flow model learns all the symmetries in $\mathcal{G}$ during training, without the need for additional hand-made data augmentation or heuristics on symmetry and interchangeability.

\section{Experiments}
\label{sec:exp}

\paragraph{Implementation Details.}
We use PointTransformerV3 (PTv3)~\cite{wu2024ptv3} as the backbone for point cloud encoder, and use Diffusion Transformer (DiT)~\cite{Peebles2022DiT} as our flow model.
Each DiT layer applies two self‐attention stages: \emph{(i)} part‐wise attention to consolidate part-awareness, and \emph{(ii)} global attention over all part tokens to fuse information. We stabilize the attention computation by applying RMS Normalization~\cite{zhang2019rmsnorm,esser2024_sd3} to the query and key vectors per head before attention operations. We sample the time steps from a U-shaped distribution following~\cite{lee2024improving}.
We pre-train the PTv3 encoder on all datasets with an additional subset of Objaverse~\cite{objaverse_Deitke} meshes, where we apply PartField~\cite{partfield2025} to obtain annotations. After pretraining, we freeze the weights of the encoder. We train our flow model on 8 NVIDIA A100 80GB GPUs for 400k iterations with an effective batch size of 256. 
We use the AdamW~\cite{loshchilov2017decoupled} optimizer with an initial learning rate $5 \times 10^{-4}$ which is halved every 25k iterations after the first 275k iterations.

\begin{table}[h]
\centering
\caption{\textbf{Dataset statistics.} We train our flow model on six datasets with varying sizes, part definitions, and complexities. The encoder is pre-trained on these datasets with an extra Objaverse dataset.}
\vspace{5pt}
\setlength{\tabcolsep}{7pt}
  \resizebox{1\linewidth}{!}{%
\begin{tabular}{lllcccc}
\toprule
\multirow{2}{*}{\textbf{Dataset}} & \multirow{2}{*}{\textbf{Task}} & \multirow{2}{*}{\textbf{{Part Definition}}} & \multicolumn{2}{c}{\textbf{Train \& Val}} & \multicolumn{2}{c}{\textbf{Test}}\\
\cmidrule(lr){4-5} \cmidrule(lr){6-7}
& & & \# Samples & \# Parts & \# Samples & \# Parts \\
\midrule
IKEA-Manual~\cite{wangikea} & Assembly & Reusability and packing & 84 & {[}2, 19{]} & 18 & {[}2, 19{]} \\
TwoByTwo~\cite{qi2025two} & Assembly & Insertable parts & 308 & {[}2, 2{]} & 144 & {[}2, 2{]} \\
PartNet-Assembly & Assembly & Semantics and functions & 23755 & {[}2, 64{]} & 261 & {[}2, 64{]} \\
BreakingBad~\cite{sellan2022breaking} & Assembly & Fracture simulation & 35114 & {[}2, 49{]} & 265 & {[}2, 49{]} \\
TUD-L~\cite{hodan2018_tudl} & Registration & RGB-D sensor scans & 19138 & {[}2, 2{]} & 300 & {[}2, 2{]} \\
ModelNet-40~\cite{wu_modelnet} & Registration & Random partition & 19680 & {[}2, 2{]} & 260 & {[}2, 2{]} \\
\midrule
Objaverse 1.0~\cite{objaverse_Deitke} & \textit{Pre-training} & \textit{From PartField~\cite{partfield2025}} & 63199 & {[}3, 12{]} & 6794 & {[}3, 12{]} \\
\bottomrule
\end{tabular}}
\label{tab:stats}
\end{table}
\subsection{Experimental Setting}
\paragraph{Datasets.} For the multi-part shape assembly task, we experiment on the BreakingBad~\cite{sellan2022breaking}, TwoByTwo~\cite{qi2025two}, PartNet~\cite{mo2019partnet}, and IKEA-Manual~\cite{wangikea} datasets. The PartNet dataset has been processed for the shape assembly task following the same procedure as~\cite{wangikea} but includes all object categories; we refer to this version as PartNet-Assembly.
Evaluation of the pairwise registration is performed on the TUD-L~\cite{hodan2018_tudl} and ModelNet-40~\cite{wu_modelnet} datasets. We follow~\cite{jiang2023se3} for prepossessing the TUD-L dataset.
We split all datasets into train/val/test sets following existing literature
for fair comparisons.
These datasets define parts at distinct levels, ranging from random partitions (e.g., ModelNet-40 and BreakingBad) to human-labeled (e.g., semantically meaningful parts in PartNet and IKEA-Manual).
The statistics and information of all datasets are summarized in Tab. \ref{tab:stats}.

\begin{figure}[t]
    \centering
    \includegraphics[width=1\linewidth]{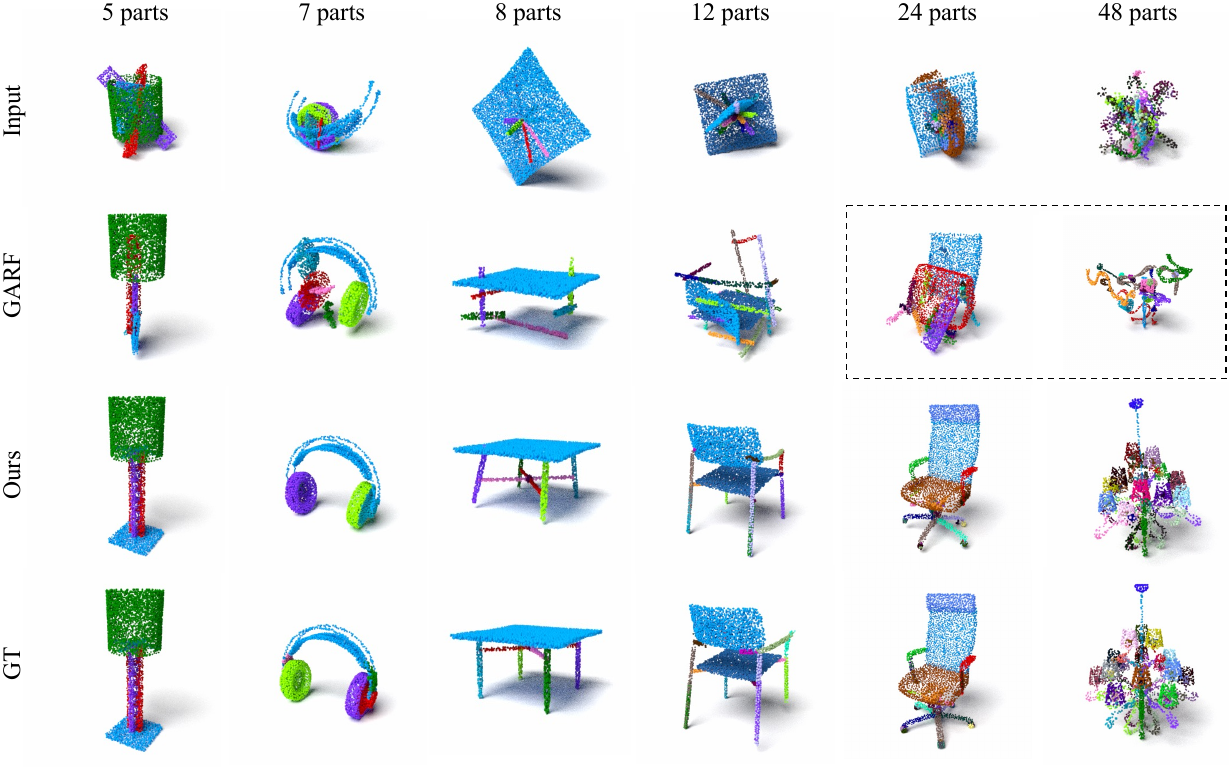}
    \caption{\textbf{Qualitative Results on PartNet-Assembly.} Columns show objects with increasing number of parts (left to right). Rows display (1) colored input point clouds of each part, (2) GARF outputs (dashed boxes indicate samples limited to 20 by GARF’s design, selecting the top 20 parts by volume), (3) Rectified Point Flow outputs, and (4) ground-truth assemblies. Compared to GARF, our method produces more accurate pose estimation on most parts, especially as the number of parts increases.}
    \label{fig:qual1}
\end{figure}

\begin{table}[t]
\caption{\textbf{Multi-Part Assembly Results.} 
\ourmethod{} (Ours) achieves the best performance across all metrics on BreakingBad-Everyday, TwoByTwo, and PartNet-Assembly datasets.}
\vspace{5pt}
  \centering
  \setlength{\tabcolsep}{9pt}
  \resizebox{\linewidth}{!}{%
    \begin{tabular}{p{3.1cm} cccccccc}
      \toprule
      & \multicolumn{3}{c}{\textbf{BreakingBad-Everyday}~\cite{sellan2022breaking}}
      & \multicolumn{2}{c}{\textbf{TwoByTwo}~\cite{qi2025two}}
      & \multicolumn{3}{c}{\textbf{PartNet-Assembly}} \\
      \cmidrule(lr){2-4} \cmidrule(lr){5-6} \cmidrule(lr){7-9}
      \textbf{Methods}
        & RE~↓       & TE~↓      & Part Acc~↑ 
        & RE~↓       & TE~↓    
        & RE~↓       & TE~↓      & Part Acc~↑   \\
      & {[}deg{]} & {[}cm{]} & {[}\%{]} 
        & {[}deg{]} & {[}cm{]}
        & {[}deg{]} & {[}cm{]} & {[}\%{]}   \\
      \midrule
    MSN~\cite{chen2022neural}
        & 85.6      & 15.7     & 16.0  
        & 70.3      & 28.4    
        & --        & --       & --     \\
    SE(3)-Assembly~\cite{wu2023leveraging}
        & 73.3      & 14.8     & 27.5  
        & 52.3      & 23.3    
        & --        & --       & --     \\
      Jigsaw~\cite{lu2024jigsaw}
        & 42.3      & 10.7     & 68.9  
        & 53.3      & 36.0    
        & --        & --       & --     \\
      PuzzleFusion++~\cite{wang2025puzzlefusionpp}
        & 38.1      &  8.0     & 76.2  
        & 58.2      & 34.2    
        & --        & --       & --     \\
      GARF~\cite{li2025garf}
        &  9.9      &  \underline{2.0}    & \underline{93.0}  
        & 22.1      & 7.1    
        & 66.9      & 21.9     & 25.7   \\
      \midrule
      \textit{\textbf{Ours} (Single)} 
        & \underline{9.6}   & \textbf{1.8}   & \textbf{93.5}
        & \underline{18.7}           & \underline{4.1}    
        & \underline{24.8}  & \underline{15.4}  & \underline{50.2} \\
      \textit{\textbf{Ours} (Joint)}  
        & \textbf{7.4}   & \underline{2.0}   & 91.1  
        & \textbf{13.2}  & \textbf{3.0  }
        & \textbf{21.8}  & \textbf{14.8}  & \textbf{53.9} \\
      \bottomrule
    \end{tabular}%
  }
  \label{tab:quant_combined}
\end{table}

\paragraph{Evaluation Protocols.} We evaluate the pose accuracy following the convention of each benchmark, with Rotation Error (RE), Translation Error (TE), Rotation Recall at 5$^\circ$ (Recall @ $5^\circ$), and Translation Recall at 1 cm (Recall @ 1 cm). For the shape assembly task, we measure Part Accuracy (Part Acc) by computing per object the fraction of parts with Chamfer Distance under 1 cm, and then averaging those per-object scores across the dataset, following~\cite{wang2025puzzlefusionpp,li2025garf,yew2020_RPMNet, wangikea}. 

Following~\cite{li2025garf}, we select the largest-volume part as the anchor and fix it during inference. However, this effectively provides the model with anchor pose in the object's CoM (center of mass) frame, an unrealistic assumption for real-world assembly applications. Therefore, we also train our model in an anchor-free setting (see Appendix~\ref{sec:add_exp}: {Anchor-free Models}), and argue that anchor-free evaluation should be the standard protocol for shape assembly tasks.

\paragraph{Baseline Methods.} We evaluated our method against state-of-the-art methods for pairwise registration and shape assembly. For pairwise registration, we compare against DCPNet~\cite{Wang_2019_dcpnet}, RPMNet~\cite{yew2020_RPMNet}, GeoTransformer~\cite{qin2022geometric}, and Diff-RPMNet~\cite{jiang2023se3}. For shape assembly, we compare against MSN~\cite{chen2022neural}, SE(3)-Assembly~\cite{wu2023leveraging}, Jigsaw~\cite{lu2024jigsaw}, PuzzleFussion++~\cite{wang2025puzzlefusionpp}, and GARF~\cite{li2025garf}. We report our performances under two training configurations: dataset-specific training where models are trained independently for each dataset (denoted \textit{Ours (Single)}), and joint training where a single model is trained across all datasets (denoted \textit{Ours (Joint)}).

\subsection{Evaluation}
We report pose accuracy for shape assembly and pairwise registration in Tab.~\ref{tab:quant_combined} \footnote{We found that the BreakingBad benchmark~\cite{sellan2022breaking,li2025garf,wang2025puzzlefusionpp} originally computed rotation error (RE) using the RMSE of Euler angles, which is not a proper metric on $\mathrm{SO(3)}$. To ensure consistency, we re-evaluate all baselines using the geodesic distance between rotation matrices via the Rodrigues formula~\cite{wiki_rodrigues, Wang_2019_dcpnet, huang_predator, qin2022geometric}. For \textit{Ours (Single)} on TwoByTwo, we used encoder pretrained in the \textit{Ours (Joint)} setting but trained the flow model only on TwoByTwo, due to its limited size.} and Tab.~\ref{tab:registration}, respectively.
Our model outperforms all existing approaches by a substantial margin.
For \textbf{multi-part assembly}, the closest competitor is GARF~\cite{li2025garf}, which formulates per‐part pose estimation as 6-DoF pose regression; see Figs.~\ref{fig:qual1} and \ref{fig:qual2}.
We attribute our superior results to two key advantages of Rectified Point Flow:  
(i) in contrast to our closest competitor GARF~\cite{li2025garf} which performs 6-DoF pose regression, our dense shape‐and‐pose parametrization helps the model learn better global shape prior and fine-grained geometric details more effectively;
and (ii) our generative formulation natively handles part symmetries and interchangeability. 
For \textbf{pairwise registration}, GARF--despite being retrained on target datasets--fails to generalize beyond the original task. In contrast, our method achieves a new state‐of‐the‐art performance on registration benchmarks, outperforming methods designed solely for registration (\eg, GeoTransformer~\cite{qin2022geometric} and Diff-RPMNet~\cite{jiang2023se3}) and demonstrating strong generalization across different datasets (Fig \ref{fig:qual2}). We also achieve the strongest shape assembly performance on IKEA-Manual~\cite{wangikea}; for more details on evaluation and visualizations, see supplementary.

\begin{figure}[t]
    \centering
    \includegraphics[width=1.02\linewidth]{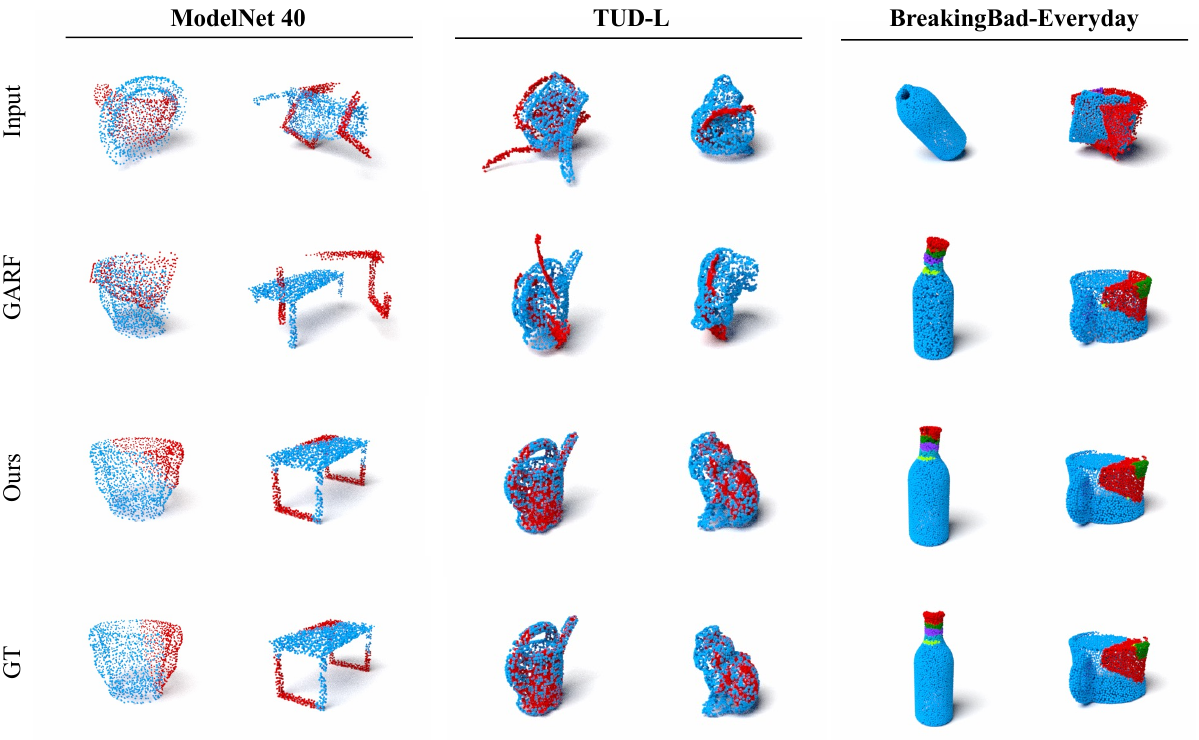}
    \caption{\textbf{Qualitative Results Across Registration and Assembly Tasks.} From left to right: pairwise registration on ModelNet 40 and TUD-L, shape assembly on BreakingBad-Everyday. From top to bottom: Colored input point clouds, GARF results, ours, and ground truth (GT). Our single model performs the best across registration and assembly tasks. }
    \label{fig:qual2}
\end{figure}

\begin{table}[t!]
\vspace{-5pt}
\caption{\textbf{Pairwise Registration Results.} \ourmethod{} (Ours) outperforms all baselines on both TUD-L and ModelNet 40, achieving the highest accuracy and lowest errors across all metrics. 
}
\vspace{5pt}
    \centering
    \setlength{\tabcolsep}{12pt}
    \centering
    
    \resizebox{0.84\linewidth}{!}{%
      \begin{tabular}{p{3.1cm}cccc}
        \toprule
          & \multicolumn{2}{c}{\textbf{TUD-L}~\cite{hodan2018_tudl}}
          & \multicolumn{2}{c}{\textbf{ModelNet 40}~\cite{wu_modelnet}} \\
        \cmidrule(lr){2-3} \cmidrule(lr){4-5}
        \textbf{Methods}
          & Recall @5°~↑  & Recall @1cm~↑
          & RE~↓      & TE~↓       \\
        & {[}\%{]} & {[}\%{]}
          & {[}deg{]} & {[}unit{]}\\
        \midrule
        DCPNet~\cite{Wang_2019_dcpnet}
          & 23.0   & 4.0   
          & 11.98 & 0.171 \\
        RPMNet~\cite{yew2020_RPMNet}
          & 73.0   & 89.0  
          & 1.71  & 0.018 \\
        GeoTransformer~\cite{qin2022geometric}
          & 88.0      & 97.5     
          & 1.58  & 0.018 \\
GARF~\cite{li2025garf}
          & 53.1   & 52.5  
          & 42.5      & 0.063     \\
        Diff-RPMNet~\cite{jiang2023se3}
          & 90.0   & 98.0  
          & –      & –     \\
        \midrule
        \textit{\textbf{Ours} (Single)}
          &   \underline{97.0}     &  \underline{98.7}     
          & \underline{1.37}  & \underline{0.003} \\
        \textit{\textbf{Ours} (Joint)}
          & \textbf{97.7}       & \textbf{99.0}  
          & \textbf{0.93}     & \textbf{0.002}     \\
        \bottomrule
      \end{tabular}%
    }
    \label{tab:registration}
    \vspace{-5pt}
\end{table}

\paragraph{Joint Training.}  
By recasting pairwise registration as a two-part assembly task, our unified formulation enables joint training of the flow model on all six datasets—including very small sets like TwoByTwo (308 samples) and IKEA-Manual (84 samples)—and the additional pretraining data from Objaverse. \textit{Ours (Joint)} consistently matches or outperforms the dataset-specific (\textit{Ours (Single)}) models. For example, on TwoByTwo the rotation error drops from 18.7$^\circ$ to 13.2$^\circ$ ($\approx 30\%$), and on BreakingBad from 9.6$^\circ$ to 7.4$^\circ$ ($\approx 23\%$), while on ModelNet-40, the rotation error is reduced from 1.37° to 0.93°. 
These results demonstrate that joint training enables the model to learn shared geometric priors from datasets {with diverse part segmentation, symmetries, and common pose distributions,} which substantially boosts performance, particularly on datasets with limited training samples.

\paragraph{Symmetry Handling.} 
We demonstrate our model's ability to handle symmetry (Sec.~\ref{sec:symmetry}) on IKEA-Manual~\cite{wangikea}, a dataset with symmetric assembly configurations. As shown in Fig.~\ref{fig:first}, while being only trained on a single configuration (left), \ourmethod{} samples various reasonable assembly configurations (right), conditioned on the same input unposed point clouds. Note how our model permutes the 12 repetitive vertical columns and swaps the 2 middle baskets, yet always retains the non-interchangeable top and footed bottom baskets in their unique positions.

\vspace{4pt}

\begin{figure}[h]
  \centering
  \begin{minipage}[t]{0.56\textwidth}
    \centering
    \includegraphics[width=\linewidth]{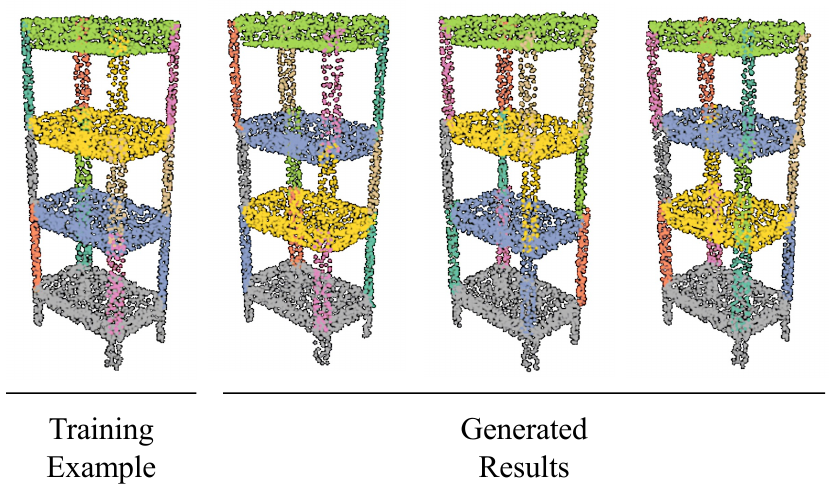}
    \caption{\textbf{Learning from a symmetric assembly.} 
Left to right: (1) a single training sample from IKEA-Manual~\cite{wangikea}, and (2–4) three independent samples generated, conditioned on the same inputs. Parts are color-coded consistently across plots. (\textit{Best viewed in color.})}
    \label{fig:first}
  \end{minipage}\hfill
  \begin{minipage}[t]{0.41\textwidth}
    \centering
    \includegraphics[width=\linewidth]{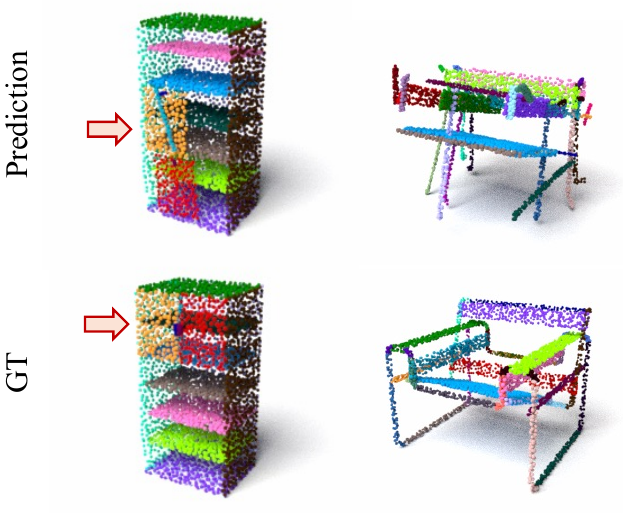}
    \caption{\textbf{Two common failure types.} First column: Assemblies that are geometrically plausible but mechanically nonfunctional. Second column: Objects with high geometric complexity.}
    \label{fig:second}
  \end{minipage}
\end{figure}

\begin{table}[h]
\vspace{-0.15cm}
  \caption{\textbf{Ablation on Encoder Pre-training.} We ablate the impact of different pre-training tasks on the shape assembly performance. Our overlap detection pre-training yields the best results.}
  \vspace{5pt}
  \centering
  \setlength{\tabcolsep}{7pt}
  \begin{minipage}[t]{0.99\textwidth}
    \centering
    \resizebox{\linewidth}{!}{%
      \begin{tabular}{lllccc}
        \toprule
        \multirow{2}{*}{\textbf{Dataset}} &
        \multirow{2}{*}{\textbf{Encoder}} & \multirow{2}{*}{\textbf{Pre-training Task}}
          & RE~↓       & TE~↓      & Part Acc~↑   \\
        & & & {[}deg{]} & {[}cm{]} & {[}\%{]}    \\
        \midrule
      \multirow{4}{*}{{BreakingBad-Everyday}~\cite{sellan2022breaking}}  & MLP & \textit{No Pre-training} 
          & 41.7      & 12.3      & 68.3        \\
         & PTv3~\cite{wu2024ptv3} & \textit{No Pre-training} 
          & 18.5      & 4.9     & 79.5      \\
       & PTv3~\cite{wu2024ptv3} & Instance Segmentation
          & 16.7      & 4.4      & 80.9        \\
       & PTv3~\cite{wu2024ptv3} & Overlap Detection (\textit{ours})
          & \textbf{9.6}       & \textbf{1.8}      & \textbf{93.5}        \\ \midrule
    \multirow{2}{*}{{{PartNet-Assembly}}~\cite{mo2019partnet}}             & {Point-BERT}~\cite{yu2021pointbert} & {Point Cloud Completion}
          & 27.4      & 23.3      & 45.2        \\
       & PTv3~\cite{wu2024ptv3} & Overlap Detection (\textit{ours})
          & \textbf{24.8}       & \textbf{15.4}      & \textbf{50.2}        \\
        \bottomrule
      \end{tabular}%
    }
  \end{minipage}
  \label{tab:contact}
\end{table}

\paragraph{Ablation on  Overlap-aware Pretraining.}
The first block of Tab.~\ref{tab:contact} compares four pretraining strategies for our flow‐based assembly model on BreakingBad–Everyday~\cite{sellan2022breaking}. 
The first two encoders (MLP and PTv3 without pre-training) are trained jointly with the flow model. The last two encoders are PTv3 pretrained on instance segmentation and our overlap-aware prediction tasks, respectively. Their pretrained weights are frozen during flow model training. 
We find that PTv3 is a more powerful feature encoder compared to the MLP, and pretraining on instance segmentation can already extract useful features for pose estimation, while our proposed overlap‐aware pretraining leads to the best accuracy. We hypothesize that, although the segmentation backbone provides strong semantic features, only our overlap prediction task explicitly encourages the encoder to learn fine‐grained part interactions and pre‐assembly cues, critical for precise assembly and registration. 

{To further compare against autoencoder-based pretraining, we substitute our encoder with Point-BERT~\cite{yu2021pointbert}'s encoder, which is pre-trained on ShapeNet~\cite{shapenet2015},  a superset of PartNet. We then train the flow model on PartNet under the same protocol. As reported in Tab.~\ref{tab:contact}, Point-BERT encoder reduces the Part Accuracy from 50.2\% to 45.2\%, and worsens the accuracy in both TE and RE.  We attribute this performance drop to: (i) Point-BERT is pre-trained for masked point cloud completion, which does not explicitly encourage the encoder to capture inter-part relations (e.g., contacts) that our overlap-aware pretraining targets; and (ii) PointBERT’s default 64-group tokenization aggregates points into relatively coarse groups, losing fine-grained geometric details for accurate pose estimation.}

\paragraph{Shape Prior Learning.}
\begin{wraptable}[10]{r}{0.36\linewidth}
\centering
\footnotesize
\caption{\small \textbf{Part Accuracy [\%] on Testing Part Schemes}. Our model shows much less drop on OOD partitions.}
\vspace{-2pt}
\label{tab:cuts}
\setlength{\tabcolsep}{5pt}
\begin{tabular}{lcc}
\toprule
\textbf{Partition Scheme} & GARF & \textit{Ours} \\
\midrule
Horizontal (ID)    & 99.5 & \textbf{100.0} \\
Axial (OOD)   & 89.5 & \textbf{97.0}  \\
Random (OOD)  & 87.5 & \textbf{97.8}  \\
\midrule
\textbf{All}      & 92.1 & \textbf{98.3}  \\
\bottomrule
\end{tabular}
\vspace{-8pt}
\end{wraptable}
To probe whether our model learns the shape priors of assembled objects better than pose-vector methods, we construct a cylindrical toy dataset. We train using a single part partition scheme and then evaluate on the same cylinder shapes but with different partition schemes. 

Specifically, we generated 6,000 training cylinders with heights and diameters uniformly sampled from $[0.2, 1.0]$ m, each cut into two parts by a horizontal plane at a random height. For testing, 600 new cylinders were cut under three schemes: (i) \textit{Horizontal} (in-distribution, ID), (ii) \textit{Axial}: through the central axis at a random orientation (out-of-distribution, OOD), and (iii) \textit{Random}: through random 3D plane (OOD). As shown in Tab.~\ref{tab:cuts}, our method achieved part accuracies of 100.0\%, 97.0\%, and 97.8\% on three partition schemes, respectively. While GARF has comparable performance on the ID scheme, it degrades on two OOD schemes, verifying that our model learns a transferable shape prior over the whole assembled object rather than overfitting to the part partition-specific patterns.

\begin{table}[h]
  \centering
  \vspace{-5pt}
  \caption{\textbf{Zero-shot Evaluation on the Unseen \textsc{Fractura} Testset.} We report Part Accuracy (\%) for GARF and ours and include the supervised performance for reference.}
  \vspace{5pt}
  \footnotesize
  \label{tab:garf_zero_shot}
    \resizebox{0.85\linewidth}{!}{%
  \setlength{\tabcolsep}{9pt}
  \begin{tabular}{ll|ccccc|c}
    \toprule
    \textbf{Setting} & \textbf{Method} & {Leg} & {Hip} & {Rib} & {Vertebra} & {Pig Bones} & \textbf{All} \\
    \midrule
    \textit{Supervised} & GARF & \textit{89.7} & \textit{80.8} & \textit{74.8} & \textit{60.8} & \textit{79.0} & \textit{77.3} \\
    \midrule
    \multirow{2}{*}{Zero-shot} & GARF & 70.5 & \textbf{72.8} & 62.9 & 37.7 & 53.4 & 57.7 \\
                               & \textit{Ours} & \textbf{79.9} & 63.4 & \textbf{76.2} & \textbf{42.0} & \textbf{63.2} & \textbf{64.4} \\
    \bottomrule
  \end{tabular}
  }
  \vspace{-10pt}
\end{table}

\paragraph{Generalization to Unseen Dataset.} We evaluated the out-of-domain generalization of our model by performing \emph{zero-shot} tests on unseen bone fracture on the test split of the \textsc{Fractura} dataset~\cite{li2025garf}, covering human bones (Leg, Hip, Vertebra, Rib) and pig bones. We report Part Accuracy for our method and GARF~\cite{li2025garf} in Tab.~\ref{tab:garf_zero_shot}. Our model achieves strong zero-shot performance, surpassing GARF in most categories, especially Leg and Rib, and higher overall accuracy.  While there remains a gap to fully supervised training, we expect that pretraining and/or fine-tuning on medical datasets will further improve our model by instilling bone-specific shape priors and fracture geometry cues.

\section{Conclusion}
\label{sec:end}

We introduce \ourmethod{}, a unified flow-based framework for point cloud pose estimation across registration and assembly tasks. By modeling part poses as velocity fields, it captures fine geometry, handles symmetries and part interchangeability, and scales to varied part counts via joint training on 100K shapes. 
Our two-stage pipeline—overlap-aware encoding and rectified flow training—achieves state-of-the-art results on six benchmarks. 
Our work opens up new directions for robotic manipulation and assembly by enabling precise, symmetry-aware motion planning.

\textbf{Limitations and Future Work.}
While our experiments focus on object-centric point clouds, real-world scenarios often involve cluttered scenes and partial observations. 
Moreover, while our model can generate multiple plausible assemblies, some of these may not be mechanically functional; see Fig.~\ref{fig:second} (first column). Also, our model cannot handle objects that exceed a certain geometric complexity; see Fig.~\ref{fig:second} (second column).
Another limitation arises from the number of points our model can handle, which may restrict its usage on large-scale objects.
Future work will extend \ourmethod{} to robustly handle occlusion, support scene-level and multi-body registration, incorporate object-function reasoning, and scale to objects with larger point clouds.

\textbf{Broader Impact.} \ourmethod{} makes it easier to build reliable 3D alignment and assembly systems straight from raw scans, benefiting robotics, digital manufacturing, AR, and heritage reconstruction. 
Given its performance and speed, it reduces the barrier for applying 3D part reasoning in resource-constrained settings. 
However, the model can still produce incorrect, hallucinated, or nonfunctional assemblies. For safety, further work on assembly verification and assembly error recovery will be essential. 

\acksection
Tao Sun is supported by the Stanford Graduate Fellowship (SGF).
Liyuan Zhu is partially supported by SPIRE Stanford Student Impact Fund Grant. Shuran Song is supported by the NSF Award \#2037101.
We appreciate the authors of GARF~\cite{li2025garf} for providing the \textsc{Fractura} dataset.
We also thank Stanford Marlowe Cluster~\cite{kapfer_2025_14751899} for providing GPU resources.

\appendix
\section*{Supplementary Material}
In this supplementary material, we provide the following:

\begin{itemize}[leftmargin=12pt, itemsep=0em]
    \item \textbf{Model Details} (Sec.~\ref{sec:add_imp}): Description of the DiT architecture and positional encoding scheme.
    \item \textbf{Additional Evaluation} (Sec.~\ref{sec:add_exp}):
    \begin{itemize}
        \item Runtime analysis.
        \item Evaluation on the preservation of rigidity at the part level.
        \item Comparison against category-specific assembly models on PartNet and IKEA-Manual.
        \item Analysis of different generative formulations,
        \item Evaluation of the anchor-free version of our model.
    \end{itemize}
    \item \textbf{Randomness in Assembly Generation} (Sec.~\ref{sec:random}): Investigation of the assemblies generated through noise sampling and linear interpolation in the noise space.
    \item \textbf{Generalization Ability} (Sec.~\ref{sec:zero_shot}): Qualitative results on unseen assemblies with same- or cross-category parts to test the model's generalization ability.
    \item \textbf{Proof of Theorem~1} (Sec.~\ref{sec:proof}): Formal definition of the assembly symmetry group $\mathcal G$ and complete proof of the flow invariance under the group $\mathcal G$.
    \item \textbf{Generalization Bounds} (Sec.~\ref{sec:risk}): Derivation of the generalization risk guarantees and comparison with that of existing 6-DoF methods.
\end{itemize}

\section{Model Details}
\label{sec:add_imp}

\paragraph{DiT Architecture.}
\begin{figure}[h]
    \centering
    \includegraphics[width=1\linewidth]{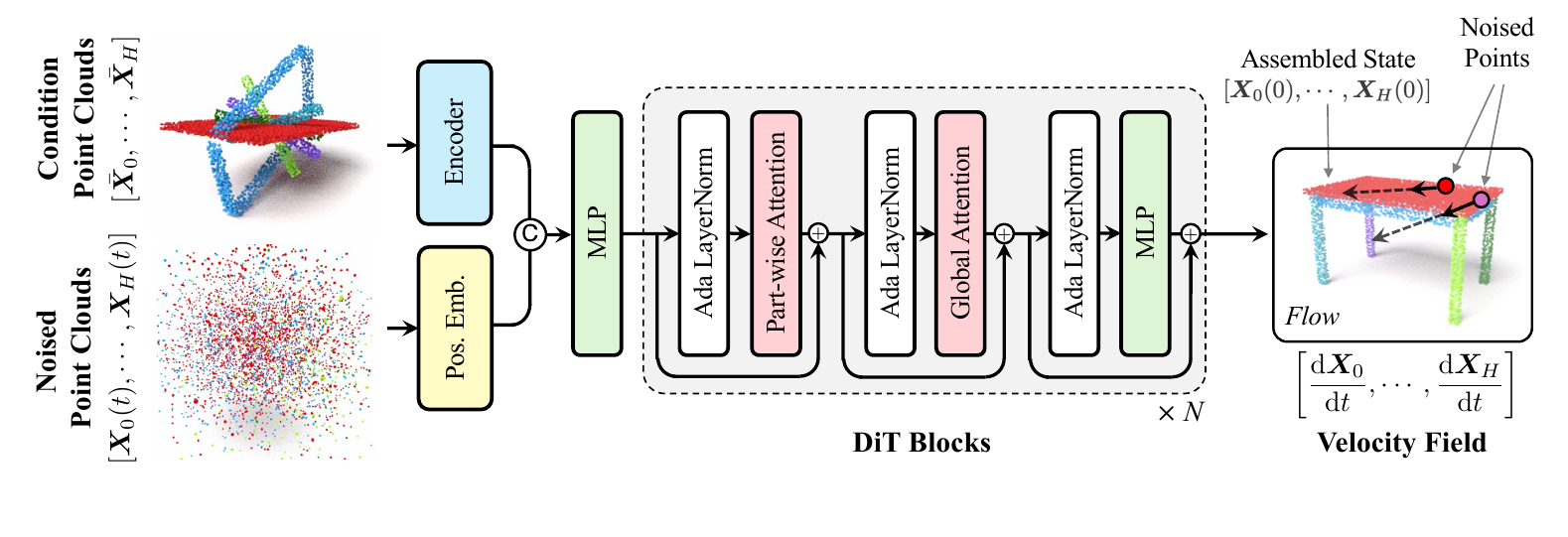}
    \caption{\textbf{Details of the DiT Block.} Our flow model consists of an Encoder and a position embedding (Pos. Emb.), and sequential DiT blocks ($N=6$). Each block comprises Part-wise Attention, Global Attention, MLP, and AdaLayerNorm layers. }
    \label{fig:arch_detail}
\end{figure}

Our flow model consists of 6 sequential DiT~\cite{Peebles2022DiT} blocks, each with a hidden dimension of 512. For the multi-head self-attention in the DiT block, we set the number of attention heads to 8, resulting in a head dimension of 64. As illustrated in Figure~\ref{fig:arch_detail}, inspired by~\cite{wang2025vggt}, we apply separated \textit{Part-wise Attention} and \textit{Global Attention} operations in each DiT block to capture both intra-part and inter-part context: 
\begin{itemize}[leftmargin=12pt]
\item \textbf{Part-wise Attention:} Points within each part independently undergo a self-attention operation, improving the model's ability to capture local geometric structures.
\item \textbf{Global Attention:} Subsequently, global self-attention operation is applied to all points across parts, facilitating inter-part information exchange.
\end{itemize}
As discussed in Sec.~\ref{sec:exp}, we apply RMS normalization individually to the query and key features in each attention head before both attention operations to enhance numerical stability during training.
Additionally, every DiT block employs AdaLayerNorm, a layer normalization whose scaling and shifting parameters are modulated by the time step~$t$, following \cite{Peebles2022DiT}.

\paragraph{Positional Encoding.}

We adopt a multi‐frequency Fourier feature mapping~\cite{mildenhall2021nerf}, to encode spatial information in both the condition and noised point clouds.
For the $j$-th point in the $i$-th part in the condition point cloud, $\bx_{i, j} \in \X$, we construct a 10-dimensional vector, which comprises:
\vspace{-5pt}
\begin{itemize}[leftmargin=12pt]
    \item The 3D absolute coordinates of $\bx_{i, j}$.
    \item The 3D surface normal $\bm{n}_{i, j}$ at that point $\bx_{i, j}$ in the condition point cloud.
    \item The 3D absolute coordinates of the noised point cloud $\X(t)$ at the index $(i, j)$.
    \item The scalar part index $i$.
\end{itemize}
\vspace{-5pt}
Each of these vectors is mapped through sinusoidal embeddings at multiple frequencies and then concatenated with the point-wise feature output of encoder $F$.

\paragraph{Inference.} 
At inference time, we recover the assembled point cloud of each part by numerically integrating the predicted velocity fields $\V( t, \{\X_i(t) \}_{i\in\Omega} \mid \X)$ from $t=1$ to $t=0$. In practice, we perform $K$ uniform Euler steps as,
$$
\hat{\X}(t - \Delta t) = \hat{\X}(t) - \V(t, \{\X_i(t) \}_{i\in\Omega} \mid \X) \Delta t, \quad \text{where } \Delta t = 1/K.
$$
After $K$ iterations, the resulting $\hat{\X}(0)$ approximates the point clouds of all parts in the assembled state. For all evaluations, we set $K=20$. 

\section{Additional Evaluation}
\label{sec:add_exp}
\paragraph{Runtime Analysis.} The number of sampling steps of the flow model is an important factor that affects both accuracy and runtime. In Tab.~\ref{tab:sampling_steps}, we vary the sampling steps and report the Part Accuracy, Chamfer Distance, and the runtime per sample in {PartNet-Assembly}, measured on a single RTX~4090 GPU. Increasing steps consistently improves accuracy and geometric precision, with diminishing returns beyond 20 steps. We use $K=20$ sampling steps for all evaluations in the paper. In this setting, our model achieves 4.3 samples per second, making it practical for robotic assembly and localization tasks that require frequent online inference.

\begin{table}[h]
  \centering
  \vspace{-5pt}
  \footnotesize
  \caption{\textbf{Trade-offs of Sampling Steps between Accuracy and Runtime.} Metrics and runtime evaluated on PartNet-Assembly dataset with a single NVIDIA RTX 4090 GPU.}
  \vspace{5pt}
  \label{tab:sampling_steps}
  \setlength{\tabcolsep}{10pt}
   \resizebox{0.91\linewidth}{!}{%
  \begin{tabular}{lcccccc}
    \toprule
    \textbf{Sampling steps} & {1} & {2} & {5} & {10} & {20} & {50} \\
    \midrule
    Part Accuracy [\%] $\uparrow$      & 25.2 & 38.1 & 46.7 & 52.1 & 53.9 & 54.6 \\
    Chamfer Distance [cm] $\downarrow$ & 3.23 & 1.70 & 0.90 & 0.75 & 0.73 & 0.71 \\
    Runtime / sample [s] $\downarrow$  & 0.072 & 0.081 & 0.108 & 0.148 & 0.232 & 0.483 \\
    \bottomrule
  \end{tabular}
  }
\end{table}

\paragraph{Part-level Rigidity Preservation.} 
As a dense point map prediction framework, \ourmethod{} is not explicitly trained to preserve the rigidity of each part.
To quantify how well it preserves the rigidity of the parts, we first align each predicted part $\hat \X_i(0)$ with the part in assembled state $\X_i(0)$ using the Kabsch algorithm. 

We then measure two rigidity preservation errors using (1) the Root Mean Square Error (RMSE) over all points and (2) the Overlap Ratio (OR) over all points at varying thresholds $\tau\in\{0.1, 0.2, 0.5, 1, 2\}\ \text{cm}$. Specifically, for part $i$ with $M_i$ points, we compute
\[
\mathrm{RMSE} = \sqrt{\frac{1}{M_i}\sum_{j=1}^{M_i}\bigl\|T'_i  \hat \bx_{i,j}(0) - \bx_{i,j}(0)\bigr\|^2} \quad \text{and}
\]
\[
\mathrm{OR}\,(\tau) = \frac{1}{M_i}\bigl|\{ j \mid \|T'_i \hat \bx_{i,j}(0) - \bx_{i,j}(0)\| < \tau\}\bigr|.
\]
Here, $T'_i\in\mathrm{SE}(3)$ denotes the optimal rigid transform returned by Kabsch; $\bx_{i, j}$ and $\hat \bx_{i, j}$ denote the $j$-th point on $\X_i(0)$ and on $\hat \X_i(0)$, respectively. Because each part is first rigidly aligned to the ground-truth assembled state, these metrics intentionally ignore pose errors, and only measure the shape difference between the predicted and ground truth point parts. 
To factor in the variations in object size across datasets, we compute the average scale of an object, denoted by $D$, as twice the average distance from the object's center of gravity to all its points.  Then, we define the Relative RMSE as RMSE / $D$, \ie, the RMSE normalized by the average object scale. We report these metrics averaged for all parts in each dataset in Tab.~\ref{tab:rigidity-test}.


\begin{table}[t]
  \centering
  \small
\caption{\textbf{Part-level Rigidity Preservation Evaluation.} \ourmethod{} demonstrates low shape discrepancy between condition and predicted part point clouds, measured by the Root Mean Square Error (RMSE), Relative RMSE, and Overlap Ratios (ORs) across datasets. $D$ represents the average object scale of each dataset. (\textit{Abbr}: BreakingBad-E = BreakingBad-Everyday; PartNet-A = PartNet-Assembly; IKEA-M = IKEA-Manual.) }
\vspace{0.2cm}
  \setlength{\tabcolsep}{3.2pt} 
  \resizebox{1\linewidth}{!}{
  \begin{tabular}{l c c c c c c c}
    \toprule
    \multirow{2}{*}{\textbf{Metric}}& & 
      \multicolumn{4}{c}{\textbf{Shape Assembly}} 
      & \multicolumn{2}{c}{\textbf{Pairwise Registration}} \\
    \cmidrule(lr){3-6} \cmidrule(lr){7-8}
     
      & 
      & BreakingBad-E
      & TwoByTwo 
      & PartNet-A 
      & IKEA-M 
      & TUD‐L 
      & ModelNet‐40 \\
    \midrule
    Object Scale $D$ & [cm]~-- &  52.1 &  107.7 &  89.0 & 61.4 & 40.8 & 70.0 \\
    RMSE                         & [cm]~↓ 
                                 & 0.76 & 2.46 & 1.04 & 0.66 & 0.16 & 0.30 \\
    Relative RMSE & [\%]~↓  & 1.5 &  2.3 &  1.2 & 1.1 & 0.4 & 0.4 \\
    \midrule
    OR\,($\tau=0.1$ cm)      & [\%]~↑ 
                                 & 52.3 & 63.8 & 33.1 & 46.7 & 96.9 & 95.0 \\
    OR\,($\tau=0.2$ cm)      & [\%]~↑ 
                                 & 61.7 & 70.8 & 48.6 & 57.7 & 97.1 & 96.0 \\
    OR\,($\tau=0.5$ cm)      & [\%]~↑ 
                                 & 74.9 & 76.8 & 66.8 & 69.8 & 97.4 & 96.3 \\
    OR\,($\tau=1$ cm)       & [\%]~↑ 
                                 & 81.4 & 78.7 & 77.9 & 81.0 & 97.7 & 96.6 \\
    OR\,($\tau=2$ cm)          & [\%]~↑ 
                                 & 89.5 & 81.9 & 87.4 & 92.0 & 98.2 & 97.1 \\
    \bottomrule
  \end{tabular}
  }
  \label{tab:rigidity-test}
\end{table}

For the \textbf{pairwise registration} task, \ourmethod{} demonstrates strong rigidity preservation. On TUD-L, we obtain a Relative RMSE of 0.4\% and ORs above 96.9\% even at the strictest $\tau=0.1\ \text{cm}$ threshold; on ModelNet-40, we achieve the same Relative RMSE of 0.4\% with similar high ORs above 95.0\%.
Specifically, on TUD‑L we record ORs of 96.9\% ($\tau=0.1\ \text{cm}$), 97.1\% ($\tau=0.2\ \text{cm}$), 97.4\% ($\tau=0.5\ \text{cm}$), 97.7\% ($\tau=1\ \text{cm}$) and 98.2\% ($\tau=2\ \text{cm}$); on ModelNet-40 the corresponding ORs are 95.0\%, 96.0\%, 96.3\%, 96.6\% and 97.1\%, demonstrating consistently strong rigidity preservation.

In the more challenging \textbf{shape assembly} task, rigidity errors remain low.  Across the four datasets, the Relative RMSE ranges from 1.1\% to 2.3\%.
At a strict threshold of $\tau=0.1\,$cm, overlap ratios (ORs) span 33.1\,\% (PartNet‑Assembly) up to 63.8\,\% (TwoByTwo); 
By $\tau=1\,$ cm, the ORs exceed 77.9\% in the four datasets (77.9\%-81.4\%), increasing further to 81.9\%-92.0\% in the more relaxed $\tau=2\,$ cm. 
The highest Relative RMSE and lower averaged ORs are observed in TwoByTwo, probably due to its limited training samples and lower shape similarity to other datasets, and the fact that TwoByTwo has the largest overall object scale of 107.7 cm among all datasets.
In contrast, IKEA-Manual, despite having fewer training samples, benefits from shared priors in furniture objects in joint training, delivering the lowest RMSE and high ORs at all thresholds.
These results demonstrate robust rigidity preservation of \ourmethod{} even in complex shape assembly scenarios.

Furthermore, please note that the subsequent pose recovery stage in \ourmethod{} further refines part poses via an SVD-based global optimization, which fits optimal poses under noises. 
Overall, we empirically confirm that \ourmethod{} generates point clouds that reliably respect the rigid structure of the conditioning parts.

\paragraph{Comparison with Category-specific Models.} 
We compare against category-specific point‐cloud assembly methods in Tab. \ref{tab:chair-results}. All baselines are trained separately for each category, and the category labels are assumed to be known at inference time. RGL-Net~\cite{narayan2022rgl} additionally assumes a top-to-bottom ordering of the input parts. In contrast, \ourmethod{} is \textit{class-agnostic} and performs inference without any class label or part ordering.
We evaluated both shape Chamfer Distance (CD) and Part Accuracy (PA) in PartNet-Assembly and IKEA-Manual, following the protocol of Huang \etal~\cite{huang2022dynamic}. 

Without category or ordering assumptions like the baseline methods, our joint model still achieves the lowest CD and matches or exceeds the PA of category-specific baselines optimized for each category (chair, table, lamp). In particular, we observe a relative improvement of 110.2\% on Lamps PA over the strongest baseline.
In IKEA-Manual, we observe that all category-specific models collapse to PA $\leq6.9\%$ for the Chair category. We hypothesize that the baselines' architecture and hyperparameter are largely tailored to PartNet. In contrast, our joint model achieves 29.9\% PA for the Chair category and 33.2\% PA for all categories, over 4 times higher than any baselines.  
Those observations confirm that our category-agnostic cross-dataset training  improves the learning of shared geometric priors far beyond any single category or dataset.

\begin{table}[t]
  \centering
  \small
  \caption{\textbf{Comparison with Category-specific Models.} We report Shape Chamfer Distance (CD) and Part Accuracy (PA) on the PartNet-Assembly and IKEA-Manual. All baselines are trained per category, whereas \ourmethod{} is trained over all categories. ($^*$RGL-Net additionally requires a top-to-bottom part ordering.)}
  \vspace{0.2cm}
  \setlength{\tabcolsep}{4pt}
  \label{tab:chair-results}
  \resizebox{\linewidth}{!}{
\begin{tabular}{lccccccccccccc}
  \toprule
  \multirow{5}{*}{\textbf{Method}} & \multirow{5}{*}{\shortstack{\textbf{Known}\\\textbf{Category}}} & \multicolumn{8}{c}{\textbf{PartNet-Assembly}}
    & \multicolumn{4}{c}{\textbf{IKEA–Manual}~\cite{wangikea}} \\
  \cmidrule(lr){3-10}\cmidrule(lr){11-14}
  & & \multicolumn{2}{c}{\textit{Chair}}
    & \multicolumn{2}{c}{\textit{Table}}
    & \multicolumn{2}{c}{\textit{Lamp}}
    & \multicolumn{2}{c}{\textbf{All}}
    & \multicolumn{2}{c}{\textit{Chair}}
    & \multicolumn{2}{c}{\textbf{All}} \\
  \cmidrule(lr){3-4}
  \cmidrule(lr){5-6}
  \cmidrule(lr){7-8}
  \cmidrule(lr){9-10}
  \cmidrule(lr){11-12}
  \cmidrule(lr){13-14}
      & & CD $\downarrow$ & PA\,$\uparrow$
      & CD $\downarrow$ & PA\,$\uparrow$
      & CD $\downarrow$ & PA\,$\uparrow$
      & CD $\downarrow$ & PA\,$\uparrow$
      & CD $\downarrow$ & PA\,$\uparrow$
      & CD $\downarrow$ & PA\,$\uparrow$ \\
      & & [cm] & [\%] & [cm] & [\%] & [cm] & [\%] & [cm] & [\%] & [cm] & [\%] & [cm] & [\%] \\
  \midrule
  B-LSTM~\cite{zhan2020generative}
      & $\checkmark$ & 1.31 & 21.8 & 1.25 & 28.6 & 0.77 & 20.8 & -- & -- & 1.81 & 3.5 & -- & -- \\
  B-Global~\cite{zhan2020generative}
      & $\checkmark$ & 1.46 & 15.7 & 1.12 & 15.4 & 0.79 & 22.6 & -- & -- & 1.95 & 0.9 & -- & -- \\
  RGL-Net*~\cite{narayan2022rgl}
      & $\checkmark$ & 0.87 & \textbf{49.2} & 0.48 & \textbf{54.2} & 0.72 & 37.6 & -- & -- & 5.08 & 4.0 & -- & -- \\
  Huang \textit{et al.}~\cite{zhan2020generative}
      & $\checkmark$ & 0.91 & 39.0 & 0.50 & 49.5 & 0.93 & 33.3 & -- & -- & 1.51 & 6.9 & -- & -- \\
  \midrule
  \textit{\textbf{Ours}} (\textit{Joint})
      & $\times$ & \textbf{0.71} & 44.1 & \textbf{0.36} & 49.4 & \textbf{0.49} & \textbf{70.0} & \textbf{0.48} & \textbf{53.9} & \textbf{1.49} & \textbf{29.9} & \textbf{0.48} & \textbf{33.2} \\
  \bottomrule
\end{tabular}
  }
\end{table}
\begin{table}[t]
  \centering
  \footnotesize
  \caption{\textbf{Generative Formulation Comparison.} We compare Rectified Flow (RF) with Denoising Diffusion Probabilistic Model (DDPM) in our method, with both using the same DiT architecture and pretrained encoder. RF achieves superior performance on Rotation Error (RE) and Translation Error (TE) across all datasets. (\textit{Abbr}: BreakingBad-E = BreakingBad-Everyday; PartNet-A = PartNet-Assembly; IKEA-M = IKEA-Manual.) }
  \vspace{0.2cm}
  \setlength{\tabcolsep}{4pt}
  \footnotesize
  \resizebox{0.94\linewidth}{!}{
  \begin{tabular}{l c c c c c c c}
    \toprule
    \multirow{2}{*}{\textbf{Metric}} 
    & \multirow{2}{*}{\shortstack{\textbf{Generative}\\\textbf{Formulation}}} 
    & \multicolumn{4}{c}{\textbf{Shape Assembly}} 
    & \multicolumn{2}{c}{\textbf{Pairwise Registration}} \\
    \cmidrule(lr){3-6} \cmidrule(lr){7-8}
    & 
    & BreakingBad-E 
    & TwoByTwo 
    & PartNet-A 
    & IKEA-M 
    & TUD‐L 
    & ModelNet‐40 \\
    \midrule
    \multirow{2}{*}{RE \   [deg]~↓ } 
      & DDPM 
      & 13.0 & 17.2 & 29.5 & 21.4 & 2.6 & 3.4 \\
      & RF   
      & \textbf{7.4} & \textbf{13.2} & \textbf{21.8} & \textbf{10.8} & \textbf{1.4} & \textbf{0.9} \\
    \midrule
    \multirow{2}{*}{TE \  [cm]~↓ } 
      & DDPM 
      & 3.5 & 10.1 & 21.3 & 19.2 & 0.5 & 0.7 \\
      & RF   
      & \textbf{2.0} & \textbf{3.0} & \textbf{14.8} & \textbf{17.2} & \textbf{0.3} & \textbf{0.2} \\
    \bottomrule
  \end{tabular}
  }
  \label{tab:diffusion}
\end{table}

\paragraph{Ablation on Generative Formulation.} 
As an alternative to the generative formulation of Rectified Flow (RF) in our method, we also evaluate a Denoising Diffusion Probabilistic Model (DDPM)~\cite{ho2020denoising} using an identical DiT architecture and the pre-trained encoder. In this setup, the forward noising process employs constant variances ($\beta$) that increase linearly from $10^{-4}$ to 0.02 over $T=1000$ timesteps.
As shown in Tab. \ref{tab:diffusion}, the RF-based model consistently outperforms the DDPM variant on both shape assembly and pairwise registration tasks, with 35.3\% lower rotation error (RE) and 11.63\% lower translation error (TE). This result is in line with the findings of GARF \cite{li2025garf}. We hypothesize that the straight-line flow in RF reduces the learning difficulty in our tasks. DDPM’s frequency‑based generation—which works well for images—may not be as effective as RF for 3D point cloud synthesis in Euclidean space.

\paragraph{{Anchor-free Models.}}  In the anchor-free setting, we do \emph{not} fix the anchor’s pose. Instead, during training, we treat the anchor exactly like any other part in the conditioning: its point cloud is centered to its own CoM frame and then randomly rotated. The model, therefore, never receives the true anchor pose from condition. The flow target is defined in the anchor frame: we put the completed assembly point cloud in the anchor's frame and then re-center the whole assembled point cloud. 
At inference, we first estimate the rigid transform between the predicted anchor and the ground-truth anchor point cloud via ICP, and then apply this transform to \emph{all} predicted parts. Metrics (RE, TE, and Part Acc.) are computed after this anchor alignment. 

Across shape assembly datasets (Tab.~\ref{tab:anchor_free1}) and pairwise registration (Tab.~\ref{tab:anchor_free2}), \textbf{anchor-free} (average) underperforms \textbf{anchor-fixed} as expected due to (i) error propagation from anchor misalignment and (ii) ambiguity induced by symmetric anchor parts. However, despite the lower absolute performance, our anchor-free model preserves the same relative ordering among baselines on all shape-assembly datasets; the only exception is GARF, which is evaluated in the anchor-fixed protocol. In particular, our model significantly improves anchor-free SOTA on BreakingBad-Everyday's Part Accuracy from 76.2\% (PuzzleFussion++~\cite{wang2025puzzlefusionpp}) to 90.2\%. For pairwise registration, our model is also achieving the lowest TE on ModelNet and the highest Recall@5$^\circ$ on TUD-L among baselines, while remain competitive on other metrics.

Furthermore, we observe that the \textbf{anchor-free} (best-of-3) evaluation, which selects the best prediction out of 3 different random seeds per sample, largely narrows the gap to \textbf{anchor-fixed}. This indicates that a small stochastic budget may find a more reliable anchor-free prediction. The effect is most pronounced on datasets with higher part count and weaker geometric cues (e.g., PartNet-Assembly and IKEA-Manual), where anchor ambiguity is stronger and small anchor errors propagate severely.

\begin{table}[tbp]
  \centering
  \caption{\textbf{Evaluation of the Anchor-free Model on Shape Assembly Datasets.}}
  \vspace{5pt}
  \label{tab:anchor_free1}
  \setlength{\tabcolsep}{7pt}
    \footnotesize
  \begin{tabular}{l *{9}{c}}
    \toprule
    \multirow{3}{*}{\textbf{Dataset}} &
    \multicolumn{3}{c}{\textbf{Anchor-fixed}} &
    \multicolumn{3}{c}{\textbf{Anchor-free} (Average)} &
    \multicolumn{3}{c}{\textbf{Anchor-free}  (Best-of-3)} \\
    \cmidrule(lr){2-4}\cmidrule(lr){5-7}\cmidrule(lr){8-10}
     & RE $\downarrow$ & TE $\downarrow$ & PA $\uparrow$ & RE $\downarrow$ & TE $\downarrow$ & PA $\uparrow$  & RE $\downarrow$ & TE $\downarrow$ & PA $\uparrow$  \\ 
     & [deg] & [cm] & [\%] & [deg] & [cm] & [\%]  & [deg] & [cm] & [\%]  \\  \midrule    
    BreakingBad-E~\cite{sellan2022breaking}  & 7.4 & 2.0 & 93.5 & 17.4 & 8.0  & 90.2 & 13.0 & 5.9  & 92.2 \\
    TwoByTwo~\cite{qi2025two}    & 13.2 & 3.0 & -- & 15.2 & 24.2 & -- & 9.0  & 16.7 & -- \\
    PartNet-Assembly~\cite{mo2019partnet} & 21.8 & 14.8 & 53.9 & 47.3 & 40.5 & 45.3 & 38.2 & 32.9 & 54.3 \\
    IKEA-Manual~\cite{wangikea}        & 20.7 & 24.7 & 33.2 & 54.7 & 51.5 & 19.5 & 39.0 & 40.5 & 28.6 \\
    \bottomrule
  \end{tabular}
\end{table}
\begin{table}[tbp]
  \centering
  \caption{\textbf{Evaluation of the Anchor-free Model on Pairwise Registration Datasets.}}
  \vspace{5pt}
  \label{tab:anchor_free2}
  \setlength{\tabcolsep}{7pt}
    \footnotesize
  \begin{tabular}{l *{12}{c}}
    \toprule
    \multirow{3}{*}{\textbf{Setting}} &
    \multicolumn{2}{c}{\textbf{ModelNet-40~\cite{wu_modelnet}}} &
     \multicolumn{2}{c}{\textbf{TUD-L~\cite{hodan2018_tudl}}} \\
    \cmidrule(lr){2-3}\cmidrule(lr){4-5}
     & RE $\downarrow$ & TE $\downarrow$ & Recall\,@\,5$^\circ$ $\uparrow$  & Recall\,@\,1cm $\uparrow$  \\
     & [deg] & [unit] & [\%] & [\%] \\     
    \midrule
  Anchor-fixed  & 0.93 & 0.002 & 97.7 & 99.0 \\
    Anchor-free (Average) & 2.05  & 0.012 & 96.6 & 96.3 \\
    Anchor-free (Best-of-3) & 1.18  & 0.007 & 97.3 & 97.3 \\
    \bottomrule
  \end{tabular}
\end{table}

\begin{figure}[t]
    \centering
    \includegraphics[width=1\linewidth]{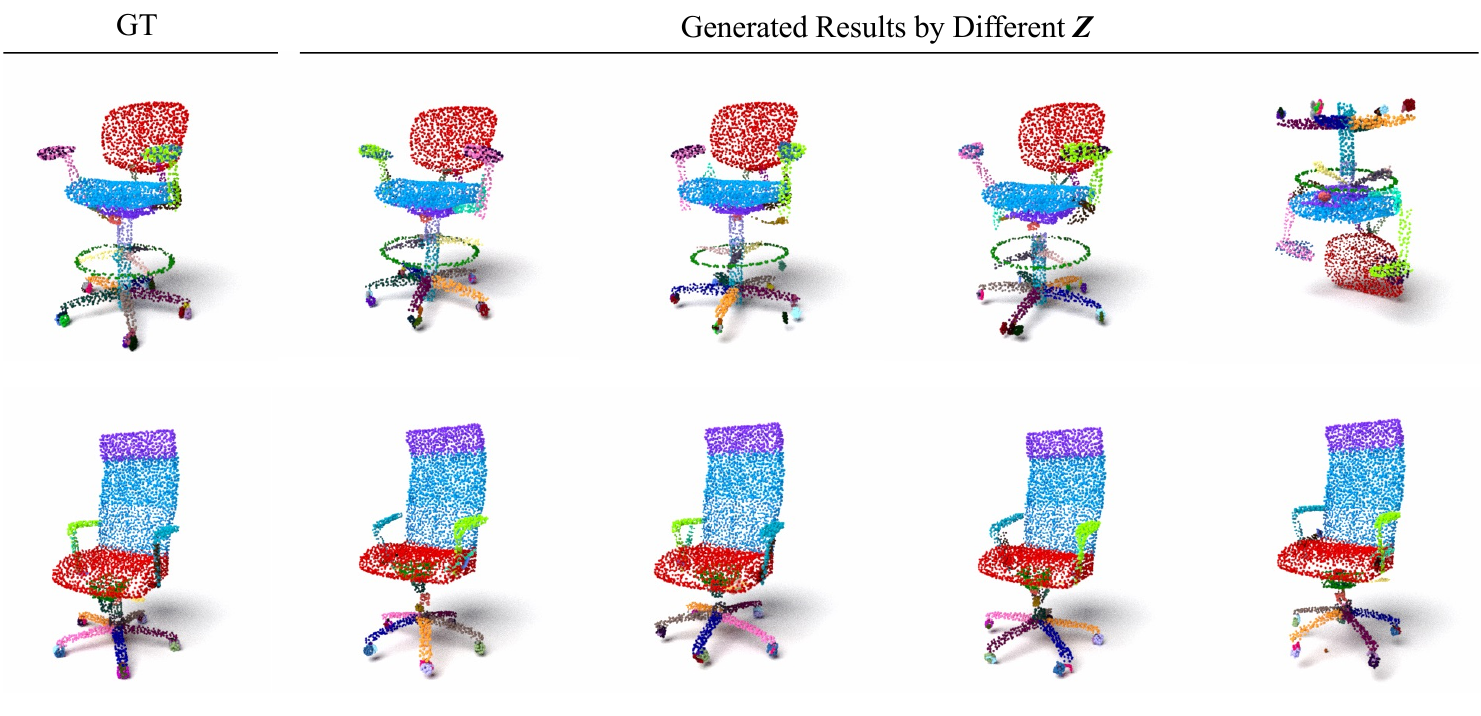}
    \caption{\textbf{Sampling in Noise Space.} For each fixed condition input point clouds, we sample four independent Gaussian noise vectors to generate distinct assembly outputs (shown in columns 2–5). While all samples preserve the object's structure, they show meaningful variation in part placement, orientation, and overall geometry, particularly for symmetric parts (\textit{e.g.}, armrests and chair bases). For comparison, the first column shows the ground-truth assemblies.}
    \label{fig:sampling}
\end{figure}

\section{Randomness in Assembly Generation}
\label{sec:random}
\paragraph{Diversity via Noise Sampling.} 
To evaluate the diversity of assembly configurations generated by \ourmethod{}, we sample the Gaussian noise vector \(\bm Z\) multiple times for the same conditional (unposed) point cloud inputs. At inference time, we set \(\bm X(1)=\bm Z\) and run the model to obtain prediction $\hat{ \bm{X}}(0)$.
In Figure~\ref{fig:sampling}, each row corresponds to a single final assembly: the first column shows the ground‐truth assembly, and the next four columns display outputs produced by four different Gaussian noises. 
All generated assemblies preserve the part structure, yet exhibit meaningful variations in the parts' placement and orientation, and overall geometry of the object. As expected, the model produces diverse configurations for symmetric or interchangeable parts, such as the armrests and the chair base.
This shows that \ourmethod{} effectively captures a diverse conditional distribution of valid assemblies.

\paragraph{Linear Interpolation in Noise Space.}

\begin{figure}[t!]
    \centering
    \vspace{20pt}
    \includegraphics[width=1\linewidth]{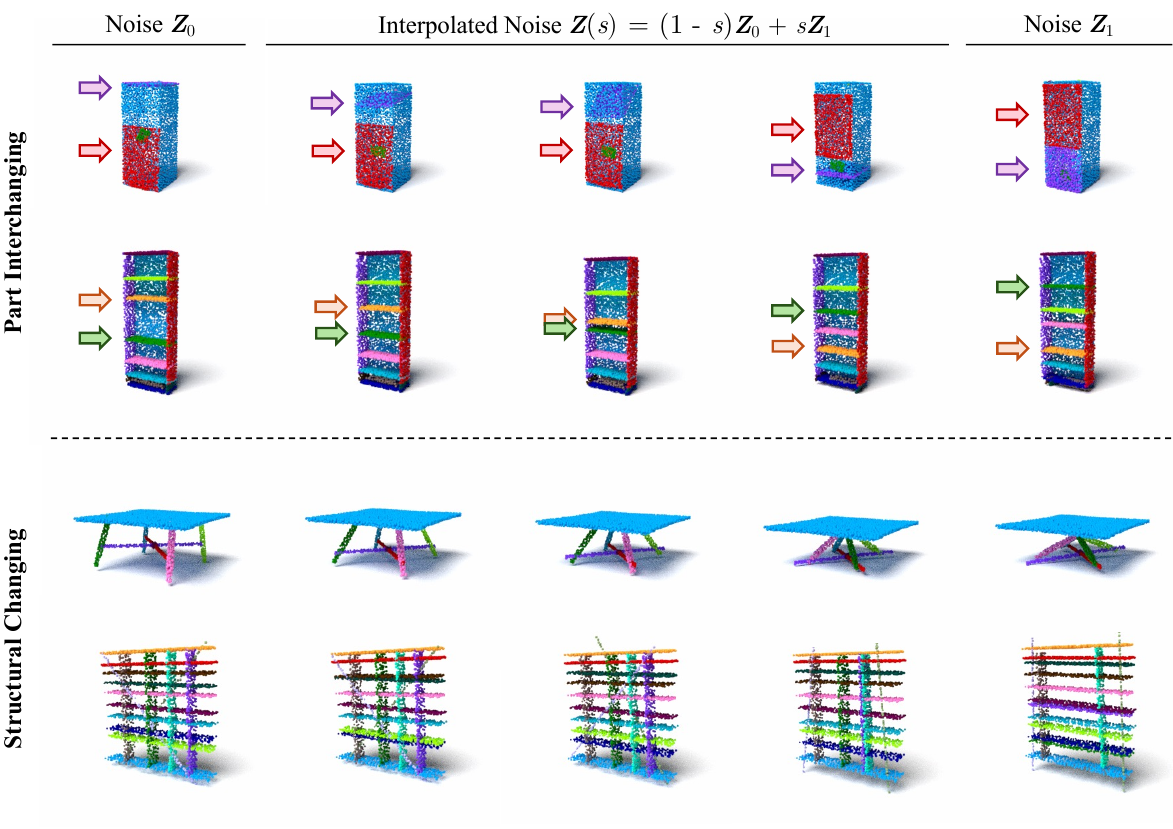}
    \caption{\textbf{Linear Interpolation in Noise Space.} For different objects in each row, we fix the same conditional input and decode two independently sampled Gaussian noise vectors, $\bm Z_0$ (leftmost) and $\bm Z_1$ (rightmost), into plausible part configurations. The three center columns show outputs from the linearly interpolated noises between $\bm Z_0$ and $\bm Z_1$. We observe a  continuous, semantically meaningful mapping from Gaussian noise to valid assemblies.}
    \label{fig:interp}
\end{figure}

We illustrate \ourmethod{}’s learned mapping from random Gaussian noise to plausible assembly configurations.
In Figure~\ref{fig:interp}, each row uses the same condition (unposed) point cloud, with the left and right columns showing the outputs of two randomly sampled noise vectors \(\mathbf{Z}_0\) and \(\mathbf{Z}_1\), respectively. The three columns in between display results generated by $\mathbf{Z}(s)$ which linearly interpolates between \(\mathbf{Z}_0\) and \(\mathbf{Z}_1\) in noise space, \ie,
\begin{equation}
    \bm{Z}(s) := (1-s) \bm{Z}_0 + s \bm{Z}_1, \quad \text{where } s \in \{0.25, 0.5, 0.75 \}.
\end{equation}
At each interpolation step \(s\), we run inference with \(\bm X(1)=\bm Z(s)\).
As $s$ increases, the predicted shapes smoothly morph from the configuration induced by $\bm{Z}_0$ toward that of $\bm{Z}_1$. 
As shown in the first 2 rows in Figure~\ref{fig:interp}, we observe smooth transitions among interchangeable parts in both examples. 
The 2 bottom rows in Figure~\ref{fig:interp} visualize the transitions in the overall structure of objects. In the table example, we observe a gradual reduction in overall height, a lowering of the horizontal beams, and a more centralized positioning where the four legs meet. In the shelf example, the transformation is more drastic: two vertical boards become horizontal and two diagonal cables are rearranged to a new vertical configuration. 
The above transitions across various assemblies confirm that \ourmethod{} learns a continuous mapping from Gaussian noise to a semantically meaningful geometry space. Note that most of the interpolated configurations are physically plausible assemblies, creating functional objects that can stand in real-world.

\section{Generalization Ability}
\label{sec:zero_shot}
We test the generalization ability of our model for novel assemblies under two different settings: between objects from the same (in-category) and different (cross-category) categories. Given two objects in PartNet-Assembly, we select certain parts from each of them as the input to \ourmethod{} to test if the model can generate novel and plausible assemblies.

\begin{figure}[t]
    \centering
    \includegraphics[width=0.77\linewidth]{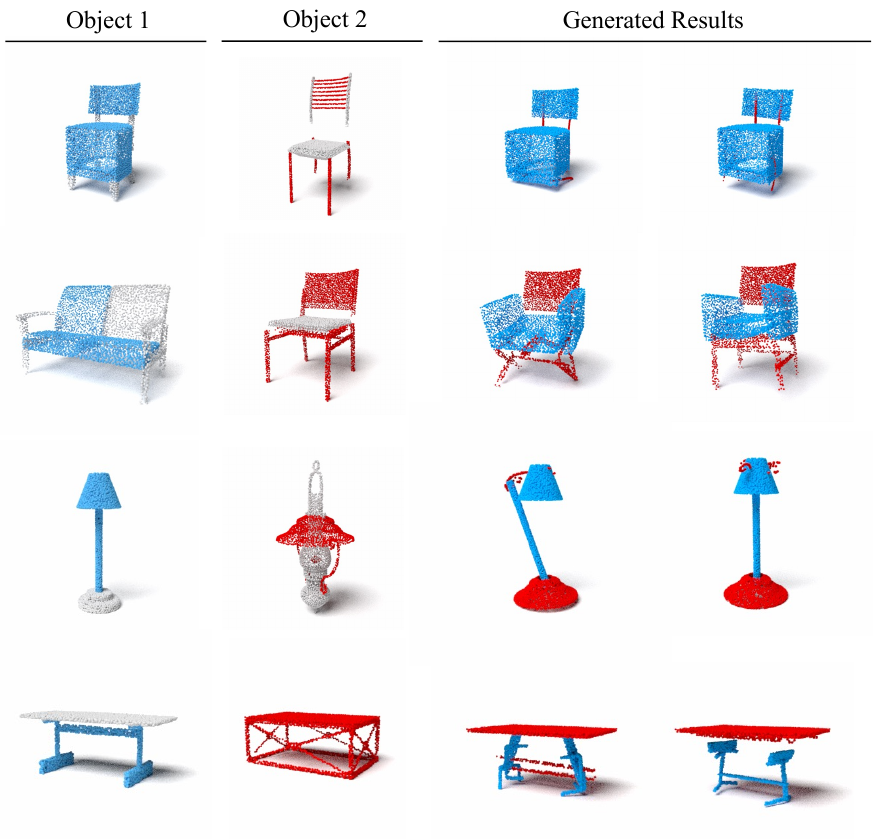}
    \caption{\textbf{Generalization to Unseen Assemblies Within the Same Category}: We select parts from two objects of the same category in the PartNet-Assembly test set. Parts from Object 1 are shown in blue, and parts from Object 2 in red; unselected parts are shown in gray. The results demonstrate that the model comprehends the underlying geometric structure of the category and can re-target parts to construct the final shape.}
    \label{fig:same_category}
\end{figure}

\paragraph{In-category Test.} As shown in Fig.~\ref{fig:same_category}, parts selected from Object 1 are rendered in blue and those from Object 2 in red. Our model then synthesizes novel assemblies that blend and reconfigure these parts in a coherent and category-consistent manner. For example, in the chair category (first two rows), the model successfully retains a functional and plausible seat-back-leg structure while creatively mixing parts. In the lamp category (third row), even though the base and shade style differ significantly between objects, generated results exhibit sensible combinations that maintain structural integrity. Similarly, in the table category (last row), our method combines parts from a flat-top table and lattice-style base to produce hybrid yet coherent table designs.
\paragraph{Cross-category Test.} Fig.~\ref{fig:cross_category} highlights \ourmethod{}'s ability to generalize to unseen part combinations across categories. This is a particularly challenging test, since such part combination may not even be possible to be assembled into a meaningful object. Nevertheless, our method still demonstrates a certain degree of generalization.
We show two input objects from different categories, for example, a monitor and a chair, a chair and a lamp, or a wall sconce and a spray bottle. The results on the right demonstrate that our model can reconfigure these parts into plausible new assemblies, preserving geometric coherence. 
This suggests that the model has learned a strong understanding of part relationship, allowing it to reason about compositionality even across category boundaries.

\begin{figure}[t]
    \centering
    \includegraphics[width=0.75\linewidth]{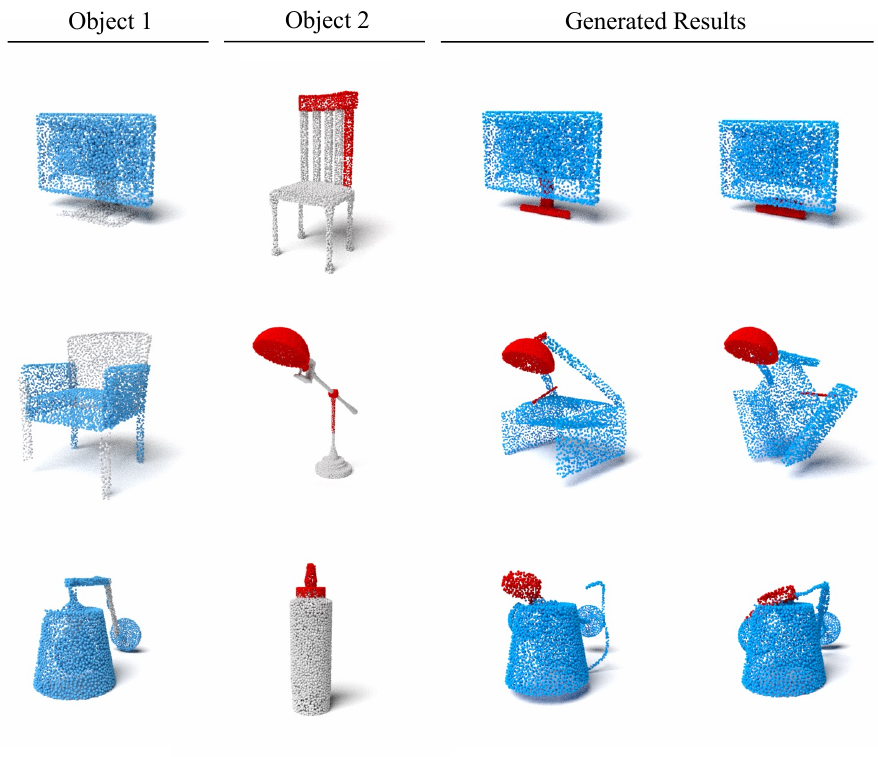}
    \caption{\textbf{Generalization to Unseen Assemblies Across Categories}: We select parts from two objects of different categories in the PartNet-Assembly test set. Parts from Object 1 are shown in blue, and parts from Object 2 in red; unselected parts are shown in gray. The results demonstrate that the model can reason about part compositionality and re-target parts to construct a plausible final shape even if some of them originate in completely different objects.}
    \label{fig:cross_category}
\end{figure}

\section{Proof of Theorem 1}
\label{sec:proof}



\label{sec:symmetry}


A key advantage of \ourmethod{} is that it learns both rotational symmetries of individual parts and the interchangeability of a set of identical parts, without any labels of symmetry parts.
Below, we first formally define an \emph{assembly symmetry group} $\mathcal G$ that characterizes the symmetry and interchangeability of the parts in the multi-part point cloud.
\begin{definition}[Assembly symmetry group] For each part $i \in \Omega$, let $G_i \subseteq \mathrm{SO}(3)$
be the (finite) stabilizer of its assembled shape, \ie, $R\X_i(0)=\X_i(0)$ for all $R \in G_i$.
Let $S \subseteq \mathfrak{S}_{|\Omega|}$ be the set of permutations that only permute indices of {identical} parts. 
We define the {assembly symmetry group} as the semidirect product
\begin{equation}
    \mathcal{G} = \bigl(G_1\times\cdots\times G_{|\Omega|}\bigr)\rtimes S.
\end{equation}

A group element
\(g=(R_1,\dots,R_{|\Omega|},\sigma)\in\mathcal{G}\)
acts on every realization of the Rectified Point Flow by
\(
g(\X_i(t)) :=  R_i\,\X_{\sigma^{-1}(i)}(t),
\)
and on network outputs of the $i$-th part (denoted as $\bm{V}_i$) by
\(g(\V_i(t, g(\X))) := R_i\,\V_{\sigma^{-1}(i)}(t, g(\X)).\)
\end{definition}

Now, we show the following result that a single point's flow distribution is invariant under any $g \in \mathcal G$.



\begin{lemma}[$\mathcal G$‑invariance of the flow distribution]
\label{lem:dist_inv}
For every element $g \in \mathcal G$ and a given multi-part point cloud $\X$, we sample a flow realization: 
$$
\bx(t) = t\bx(1) + (1-t)\bx(0), \quad \text{where }\  \bx(1) \sim \mathcal{N}(\mathbf{0}, \bm{I}), \bx(0) \sim \X.
$$
then, we have
$$
     p\bigl(\{g \cdot \bx(t)\}_{t\in[0,1]} \bigr)=p(\{\bx(t)\}_{t\in[0,1]}).
$$
\end{lemma}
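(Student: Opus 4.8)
The plan is to show that the joint law of the trajectory $\{\bx(t)\}_{t\in[0,1]}$ is completely determined by the pair $(\bx(0),\bx(1))$ via the deterministic affine map $t\mapsto t\bx(1)+(1-t)\bx(0)$, so it suffices to prove that the law of $(\bx(0),\bx(1))$ is invariant under the action $g\cdot(\bx(0),\bx(1)) = (R_i\bx_{\sigma^{-1}(i)}(0),\,R_i\bx_{\sigma^{-1}(i)}(1))$ induced by $g=(R_1,\dots,R_{|\Omega|},\sigma)\in\mathcal G$. First I would observe that $\bx(0)$ and $\bx(1)$ are sampled independently (the Gaussian endpoint is drawn independently of the draw from $\bm{X}$), so the joint density factors as $p(\bx(0))\,p(\bx(1))$, and it is enough to check invariance of each factor separately. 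For the Gaussian factor: each part's noise $\bx_i(1)\sim\mathcal N(\mathbf 0,\bm I)$ is i.i.d.\ across parts and rotation-invariant, so applying an orthogonal $R_i$ to each block and permuting the i.i.d.\ blocks by $\sigma$ leaves the product Gaussian law unchanged --- this is the standard rotational invariance of the isotropic Gaussian combined with exchangeability. For the target factor: $\bx_i(0)$ is uniform on the assembled shape $\X_i(0)$; since $R_i\in G_i$ stabilizes $\X_i(0)$ (i.e.\ $R_i\X_i(0)=\X_i(0)$ as point sets), $R_i$ pushes the uniform distribution on $\X_i(0)$ to itself, and $\sigma$ only permutes indices among \emph{identical} parts, which have the same assembled shape and hence the same uniform law --- so the product measure over parts is preserved.

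Concretely, the key steps in order are: (i) reduce trajectory-invariance to endpoint-invariance using that the interpolation is a fixed deterministic function of the endpoints (a measurable bijection onto its image for each fixed pair), so pushforwards commute with $g$; (ii) factor the endpoint law into the Gaussian part and the data part by independence; (iii) prove the Gaussian part is invariant by isotropy of $\mathcal N(\mathbf 0,\bm I)$ under each $R_i\in\mathrm{SO}(3)$ and exchangeability of the i.i.d.\ per-part noises under $\sigma$; (iv) prove the data part is invariant by the defining property $R_i\X_i(0)=\X_i(0)$ of the stabilizer $G_i$ together with the fact that $\sigma$ permutes only indices of identical parts (equal shapes, equal sampling laws); (v) combine (iii) and (iv) to conclude the endpoint law, hence the trajectory law, is $g$-invariant for every $g\in\mathcal G$.

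The main obstacle is step (i): making precise that equality of the laws of $(\bx(0),\bx(1))$ and of $g\cdot(\bx(0),\bx(1))$ implies equality of the laws of the two \emph{trajectories} (viewed as elements of, say, $C([0,1],\R^{3\times M})$). This requires noting that the map $\Phi:(\bx(0),\bx(1))\mapsto\big(t\mapsto t\bx(1)+(1-t)\bx(0)\big)$ is continuous (hence measurable), that $g$ acts on trajectories simply by $\Phi(g\cdot(\bx(0),\bx(1))) = g\cdot\Phi(\bx(0),\bx(1))$ pointwise in $t$ (which is immediate since the action is linear and $t$-independent), and therefore $\Phi_\#\mathrm{law}(g\cdot(\bx(0),\bx(1))) = g_\#\Phi_\#\mathrm{law}(\bx(0),\bx(1))$; combined with step-(ii)--(iv) invariance of the endpoint law this gives the claim. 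A secondary subtlety is that $\X_i(0)$ is a finite point set while $\bx_i(0)$ is "sampled from $\X_i(0)$"; I would treat $\bx_i(0)$ as uniform over the $M_i$ points (or over the underlying surface, depending on the sampling convention), and in either reading $R_i\X_i(0)=\X_i(0)$ yields $(R_i)_\#\mathrm{Unif}(\X_i(0)) = \mathrm{Unif}(\X_i(0))$, so the argument is unaffected. Everything else is routine.
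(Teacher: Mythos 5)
Your proposal is correct and follows essentially the same route as the paper's proof: factor the path law into the two independent endpoint laws, use isotropy (and exchangeability) of the Gaussian endpoint, and use the stabilizer property plus identical-part interchangeability for the data endpoint. Your step (i), making explicit that the deterministic interpolation map commutes with the group action so endpoint-invariance lifts to trajectory-invariance, is a more careful formalization of what the paper states informally as ``the randomness resides at $t=0$ and $t=1$ only,'' but it is not a different argument.
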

\begin{proof}
Recall that, in Rectified Point Flow, a flow of a single point is $\bx(t) := (1-t)\bx(0) + t\bx(1)$, where $\bx(0) \sim \X$ is drawn uniformly from the assembled shape and $\bx(1) \sim \mathcal N(\mathbf0,\mathbf I)$.
Because the end–points of the linear interpolation are sampled independently, the PDF of the path distribution factorizes as
\begin{equation}
  p\bigl(\{\bx(t)\}_{t\in[0,1]}\bigr)
  =
  p\bigl(\bx(1)\bigr) p\bigl(\bx(0)\bigr),
  \label{eq:factored_path}
\end{equation}
which indicates the randomness resides by the states
$t = 0$ and $t = 1$ only. 
Because the perturbation $\bx(1) \sim \mathcal N(\mathbf0,\mathbf I)$
is isotropic, $p(\bx(1))$ is invariant under \emph{every} rotation
$R \in \mathrm{SO}(3)$.  For $p(\bx(0))$ we distinguish two cases:

\begin{itemize}[leftmargin=12pt]
\item \emph{Rotational symmetry:}  
      If $R \in G_i$, then $R \X_i(0)=\X_i(0)$ point‑wise, so
      $p(\bx(0))=p(R \bx(0))$.
\item \emph{Interchangeability.}
      If parts $i$ and $j$ are identical,
      sampling first a part index with probability $p(i)=N_i/N$
      and then a point uniformly inside it implies
      $p(\bx(0) \in \X_i(0)) = p(\bx(0) \in \X_j(0))$.
      Therefore exchanging the indices ($\sigma(i)=j, 
      \sigma(j)=i$) leaves $p(\bx(0))$ unchanged.
\end{itemize}
By composing the above two properties for all parts, we complete the proof.
\end{proof}

Lemma~\ref{lem:dist_inv} can directly lift from single points to the full multi–part flow $\{\X_i(t)\}_{i\in\Omega}$. 
This leads us to the Theorem~\ref{thm:loss}:
For every element $g \in \mathcal G$, we have the learning objective in Eq.~\ref{eq:fm} following 
\(
      \cL_\mathrm{CFM}(\V) = \cL_\mathrm{CFM}(g (\V(t, \{\X_i(t) \}_{i\in\Omega} ; g(\X)))).
\)

\section{Generalization Bounds}
\label{sec:risk}
While the Rectified Point Flow predicts a much higher‑dimensional
space ($3M_i$ coordinates per part), we find that its Rademacher complexity scales exactly the same rate as the 6‑DoF methods,
$O(1/\!\sqrt m)$, where $m$ is the number of samples in the training set.

Below, we compute their Rademacher complexities and empirical risks, respectively. 
Without loss of generality, we use the reconstruction error for the evaluation of poses, \ie,
\(
    \ell(\hat R,\hat{\bm t}; R^\star,\bm t^\star)
    =
    \bigl\|(\hat R\!-\!R^\star)\X^\star+\hat{\bm t}\!-\!\bm t^\star\bigr\|_F
\).
First, we define hypothesis classes for both methods: 
\begin{itemize}[leftmargin=12pt]
\item Our Rectified Point Flow: 
      $$\cF_i=\{C_i \mapsto \hat{\X_i}(0;\theta) \mid \theta \in \Theta\}, \quad \text{where } \hat{\X_i}(0;\theta) := \X_i(1)-\int_0^1 \V_i(t;C, \theta) \D t.$$ 
\item Pose vector-based flow:
      $$\cG_i=\{C_i \mapsto (\hat R_i,\hat{\bm t_i})_\phi \mid \phi \in \Phi\}.$$
\end{itemize}

\paragraph{Rademacher Complexity of Rectified Point Flow.}
With $m$ \textit{i.i.d.} training objects $D=\{(C^{(k)},R^{\star(k)},\bm t^{\star(k)})\}_{k=1}^m$, we write the population risk
$\cR(h)=\mathbb{E}\left[\ell\bigl(h(C),R^\star,\bm t^\star\bigr)\right]$
and empricial risk
$$
\hat\cR_D(h)=\frac1m\sum_{k=1}^m\ell\bigl(h(C^{(k)}), R^{\star(k)},\bm t^{\star(k)}\bigr).
$$
Since our Rectified Point Flow method estimates the part pose by the Procrustes operator, \ie, $(\hat R,\hat{\bm t}) = \mathrm{Pr} \bigl(\hat{\X}(0; \theta)\bigr)$, where $\mathrm{Pr}: \mathbb{R}^{3N} \to \mathrm{SE}(3)$ is the Procrustes operator, we have following Lipschitz contracting property.
\begin{property}[Lipschitz Contracting]
Let $\X^\star\!\in\!\mathbb{R}^{3N}$ be the centralized ground‑truth point set
of a single part, and denote
$\sigma_{\min}=\sigma_{\min}((\X^\star)^\top\X^\star)$.
If
$\|\hat{\X}(0)-\X^\star\|_F\le\varepsilon$,
the optimal Procrustes solution
$(\hat R,\hat{\bm t})=P(\hat{\X}(0))$
satisfies
\begin{equation}
    \label{eq:lipschitz}
    \|(\hat R\!-\!R^\star,\hat{\bm t}\!-\!\bm t^\star)\|
    \;\le\;
    \frac{\varepsilon}{\sqrt{\sigma_{\min}}}.
\end{equation}
\end{property}
This property directly follows from Davis–Kahan perturbation bounds~\cite{davis1970rotation} for the Top‑$3$
singular vectors. 
Crucially,
$\sigma_{\min}=\Omega(N)$\footnote{Here, $\Omega(\cdot)$ denote the asymptotic rate, instead of part index set.} for well‑spread point clouds, so 
$\mathrm{Pr}$ is a \emph{$\tfrac1{\sqrt N}$‑Lipschitz} map.

Let $\Rad_m(\cH)$ denote the empirical Rademacher complexity on $S$.
Because composition with a $L$‑Lipschitz map contracts Rademacher
complexity
\begin{equation}
    \Rad_m(\mathrm{Pr} \circ \cF)
    \;\le\;
    \frac{1}{\sqrt{N}}\;
    \Rad_m(\cF)
    \;\le\;
    \frac{L_\Theta \sqrt{3N}}{\sqrt{N}}\;
    \frac{1}{\sqrt{m}}
    ~=~
    O\!\Bigl(\frac{L_\Theta}{\sqrt{m}}\Bigr).
\end{equation}

\paragraph{Rademacher Complexity of 6DoF-based Methods.}
For the baseline we need only regress $d=6$ numbers, hence
\begin{equation} 
    \Rad_m(\cG)
    \;\le\;
    \frac{L_\Phi \sqrt{d}}{\sqrt{m}}
    =
    O\!\Bigl(\frac{L_\Phi}{\sqrt{m}}\Bigr).
\end{equation}

\paragraph{Comparison of Generalization Bounds.}
Applying Bartlett Theorem and using~\eqref{eq:lipschitz}, we obtain,  
with probability at least $1-\delta$ over the samples from $D$,
\begin{align}
    \cR\bigl(P\!\circ\!\hat f\bigr)
    &\;\le\;
    \hat\cR_D\bigl(P\!\circ\!\hat f\bigr)
      + 2\,\Rad_m(P\!\circ\!\cF)
      + 3\,\sqrt{\frac{\log(2/\delta)}{2m}},
      \tag{\textsc{Flow}} \label{eq:bound-flow}\\
    \cR(\hat g)
    &\;\le\;
      \hat\cR_D(\hat g)
      + 2\,\Rad_m(\cG)
      + 3\,\sqrt{\frac{\log(2/\delta)}{2m}},
      \tag{\textsc{6DoF}} \label{eq:bound-se3}
\end{align}
where $\hat f\!\in\!\cF$ and $\hat g\!\in\!\cG$ are the empirical‑risk
minimizers on $S$.

In conclusion, while Rectified Point Flow predicts a {much higher‑dimensional}
space, the contraction of the SVD stage cancels this apparent over-parameterization, producing a complexity term
that scales at the same rate of $O(1/\!\sqrt m)$ as the 6‑DoF baseline; \eqref{eq:bound-flow}–\eqref{eq:bound-se3}. 

As a result, our method enjoys at least same generalization risk guarantees
despite operating in an over‑parameterized prediction space, 
while retaining the $\mathcal G$-invariance benefits proven in
Sec. \ref{sec:proof}.



{
\small
\bibliography{references}
}

\end{document}